\documentclass[12pt]{article}
\usepackage{PRIMEarxiv}
\makeatletter
\renewcommand{\normalsize}{\@setfontsize\normalsize{12}{14}\abovedisplayskip 8pt plus 2pt minus 4pt
  \abovedisplayshortskip 0pt plus 3pt
  \belowdisplayskip \abovedisplayskip
  \belowdisplayshortskip 4pt plus 3pt minus 2pt
}
\renewcommand{\small}{\@setfontsize\small{11}{13}\abovedisplayskip 7pt plus 2pt minus 4pt
  \abovedisplayshortskip 0pt plus 3pt
  \belowdisplayskip \abovedisplayskip
  \belowdisplayshortskip 3pt plus 2pt minus 2pt
}
\makeatother
\normalsize
\usepackage[utf8]{inputenc}
\usepackage[T1]{fontenc}
\usepackage{courier}
\usepackage{amsmath,amssymb,amsthm}
\DeclareMathSizes{11}{11}{8}{6}
\usepackage{enumitem}
\usepackage{microtype}
\usepackage{graphicx}
\usepackage{booktabs}
\usepackage{placeins}
\usepackage{wrapfig}
\usepackage{algorithm}
\usepackage{algpseudocode}
\usepackage[authoryear,round]{natbib}
\usepackage{hyperref}
\usepackage{url}
\usepackage{cleveref}
\usepackage{xspace}
\usepackage{needspace}

\AddToHook{cmd/section/before}{\Needspace{6\baselineskip}}

\hypersetup{hidelinks,pdftitle={Traversing the Satisfaction-Diversity Frontier in Text-to-Image Diffusion},pdfauthor={Kevin Zhai, Siva Rajesh Kasa, Soumya Roy, Sumit Negi, Mubarak Shah}}
\newtheorem{theorem}{Theorem}
\newtheorem{lemma}[theorem]{Lemma}
\newtheorem{proposition}[theorem]{Proposition}
\newtheorem{corollary}[theorem]{Corollary}

\theoremstyle{remark}

\newcommand{\method}{SatisDive\xspace}
\newcommand{\copyrule}{late-stage latent replacement\xspace}

\title{Traversing the Satisfaction-Diversity Frontier in Text-to-Image Diffusion}
\author{Kevin Zhai\textsuperscript{1}, Siva Rajesh Kasa\textsuperscript{2}, Soumya Roy\textsuperscript{2}, Sumit Negi\textsuperscript{2}, Mubarak Shah\textsuperscript{1}\\[0.5em]
\normalfont\textsuperscript{1}University of Central Florida\qquad\textsuperscript{2}Amazon}

\begin{document}
\maketitle
\addtocontents{toc}{\protect\setcounter{tocdepth}{-1}}
\begin{abstract}
Text-to-image generation enables users to explore several images generated from the same prompt. For these generated images to be useful, each one must reflect the user's preferences, measured by a learned reward, and differ visually from the others to maintain diversity. Existing methods are limited: they either address reward and diversity separately or combine them in one aggregate score, enabling high diversity to offset low rewards. In this paper, we address these limitations by formulating generation as \emph{satisficing}: every image (candidate) must satisfy a reward floor and the batch of images must satisfy a diversity cutoff. The reward floor controls the balance between worst-candidate reward and batch diversity; we show that varying this floor defines a Pareto frontier. To traverse this frontier, we introduce \method, a training-free inference-time method. \method uses a batch-relative reward cutoff to distinguish lower- from higher-reward candidates, emphasizing reward improvement for candidates below the cutoff and diversity among candidates above it. On Pick-a-Pic, at matched DreamSim, \method improves worst-candidate reward over FK steering by up to $0.43$ with FLUX.1-dev as the base model and HPSv3 as the reward, and by up to $0.70$ with SANA-1.6B as the base model and ImageReward as the reward. More broadly, across their overlapping DreamSim ranges, \method's satisfaction--diversity curve Pareto-dominates FK steering's curve in each setting.
\end{abstract}

\begin{center}
\small\textbf{Code:} \url{https://github.com/UCF-CRCV/SatisDive}
\end{center}
\section{Introduction}
\label{sec:intro}

An important goal of text-to-image generation is to give users a choice among several high-quality candidate images. For example, a user creating a poster illustration may compare images generated from the same prompt, choosing the lighting, background, or composition that best fits the poster. Recent inference-time methods increase diversity~\citep{singh2024negtome} to provide a wider range of options, maximize a learned reward~\citep{singhal2025fk} to align generated images with user preferences, or select images by maximizing a linear combination of total image reward and batch diversity~\citep{parmar2026scaling}.

However, these approaches can still limit the user's viable choices. Firstly, reward-steering methods can concentrate sampling on higher-reward images, making the returned images visually similar~\citep{singhal2025fk,kim2025das,shekhar2026vasr}. Secondly, reward-free diversity methods make generated images more visually different, but do not explicitly account for learned reward~\citep{singh2024negtome,kim2026diverse,corso2023particle,vinograd2026eddy}. Thirdly, batch-level reward--diversity objectives~\citep{parmar2026scaling} allow greater diversity or high rewards for other images to compensate for one image's low reward.

These limitations motivate controlling worst-image reward through a reward floor, i.e., a minimum reward level required of each candidate. As the reward floor increases, the batch can become less diverse since fewer images satisfy the floor. We show that varying the reward floor defines a Pareto frontier between worst-image reward and batch diversity, which we call the satisfaction--diversity frontier. Along this frontier, we formulate image generation as \emph{satisficing}~\citep{simon1956}: \emph{every generated image must satisfy a reward floor, and the batch must satisfy a diversity cutoff.} Once a candidate clears the reward floor, further increases in reward do not improve satisfaction. To this end, we introduce \method, a training-free inference-time method. As Figure~\ref{fig:teaser} shows, \method generates visually distinct images above the reward floor, while FK steering~\citep{singhal2025fk}, a strong reward-steering baseline, returns nearly identical images.

\begin{figure}[ht]
\centering
\includegraphics[width=0.9\textwidth]{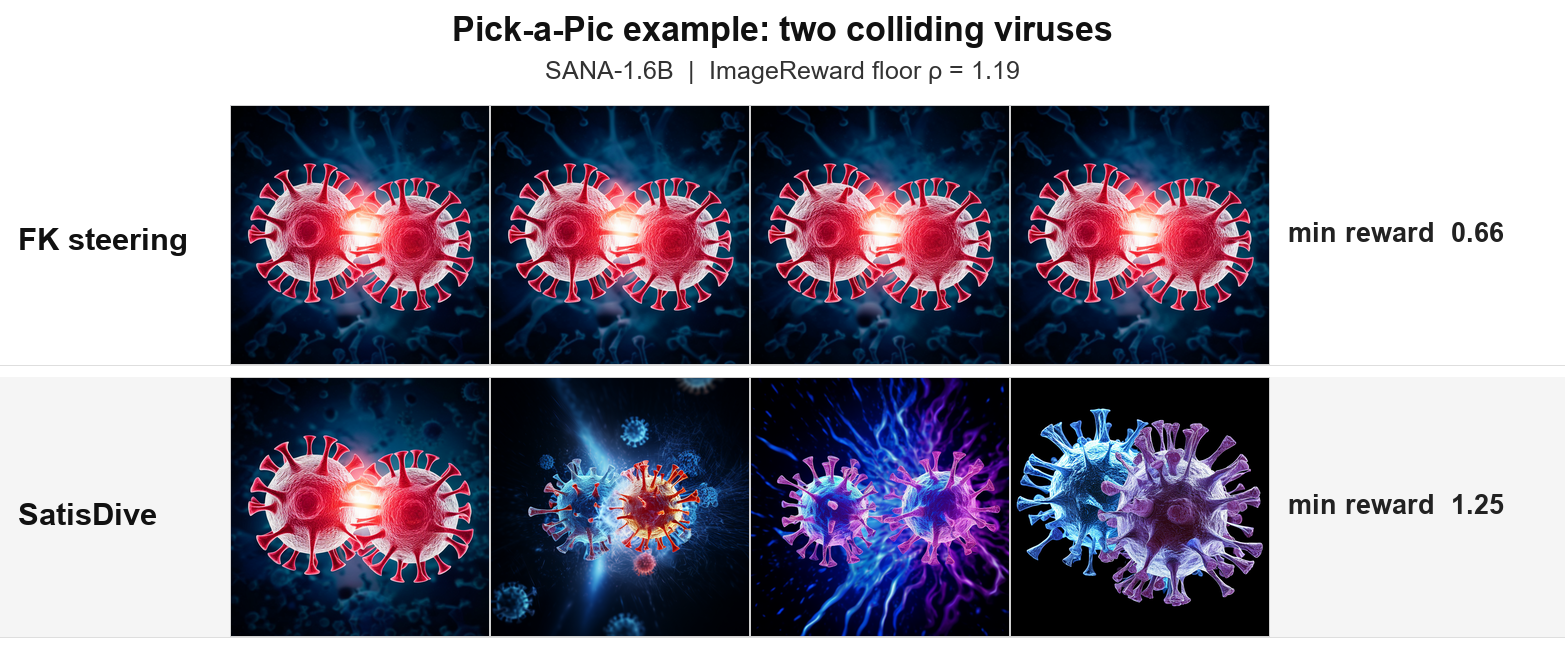}
\caption{\textbf{\method returns four visually distinct images above the reward floor, while FK steering returns four nearly identical images.} We use the SANA-1.6B~\citep{xie2025sana} flow matching model with ImageReward~\citep{xu2023imagereward} reward model. On a Pick-a-Pic~\citep{kirstain2023pickapic} prompt depicting two colliding viruses, the minimum reward is $0.66$ for FK steering and $1.25$ for \method, with reward floor $\rho{=}1.19$.}
\label{fig:teaser}
\end{figure}

Our objective traverses the satisfaction--diversity frontier through a per-image reward penalty and a batch-level diversity penalty. Under the satisficing formulation, a visually different but low-reward image should contribute less to batch diversity. Thus, these penalties are complementary: images with rewards below a batch-relative reward cutoff incur larger reward penalties and contribute less to the diversity penalty, and images with rewards above the cutoff incur smaller reward penalties but contribute more to the diversity penalty. Because some candidates may remain low reward throughout denoising, we introduce \copyrule, which replaces their latents with those of higher-reward candidates. Subsequent stochastic denoising and the diversity penalty allow the resulting trajectories to produce distinct images.

Empirically, \method improves worst-candidate reward without sacrificing batch diversity: across their overlapping DreamSim ranges, \method's worst-candidate reward is consistently higher than FK steering's, thus our satisfaction--diversity curves Pareto-dominate FK steering's on both FLUX and SANA. Our contributions are as follows.
\begin{itemize}[leftmargin=*,itemsep=4pt,parsep=0pt,partopsep=0pt,topsep=0pt]
    \item We show that different reward floors define a Pareto frontier between worst-image reward and batch diversity. Under our \emph{satisficing} formulation, every image must satisfy the selected reward floor, and the batch must satisfy a diversity cutoff.
    \item We introduce \method, a training-free inference-time method whose objective provably traverses this satisfaction--diversity frontier. The objective uses a batch-relative reward cutoff to emphasize reward improvement for lower-reward candidates and diversity among higher-reward candidates.
    \item At matched DreamSim, \method improves worst-image reward by up to $0.43$ on FLUX and $0.70$ on SANA. Our satisfaction--diversity curve Pareto-dominates the curves traced by FK steering on the FLUX.1-dev and SANA-1.6B base models.
\end{itemize}

\section{Problem Formulation}
\label{sec:problem}

A rectified-flow text-to-image model~\citep{liu2023flow} generates an image from a prompt $c$ using a learned velocity field $v_\theta(x_t,t,c)$. Generation starts from Gaussian noise $x_1\sim\mathcal N(0,I)$ and integrates $v_\theta$ from $t=1$ to $t=0$ over $T$ discretized steps, producing the image $x_0$. This formulation also covers other diffusion models such as DDPM~\citep{ho2020ddpm}, which differ in their noise schedule~\citep{lipman2023flowmatching}. Table~\ref{tab:notation} in Appendix~\ref{app:notation} summarizes the notation used throughout the paper.

Let $p_\theta(x_0\mid c)$ denote the base model's distribution. We consider methods that maintain $K$ candidates throughout a single sampling pass and generate a batch of $K$ images (denoted $x_0^{1:K}$). Given a reward $r(x_0,c)$, we first consider reward-steering methods (e.g., FK steering~\citep{singhal2025fk}) that yield high-reward images by targeting the tilted distribution $p_\lambda$:
\begin{equation}
\label{eq:tilt}
  p_\lambda(x_0\mid c)\propto p_\theta(x_0\mid c)\,\exp\!\big(\lambda\,r(x_0,c)\big),\qquad \lambda\ge 0,
\end{equation}
where larger $\lambda$ increases the relative probability of higher-reward images. These methods approximate Eq.~\ref{eq:tilt} by resampling candidate latents at intermediate timesteps with selection probabilities determined by candidate rewards; these rewards are evaluated on the model's estimate $\hat x_0$ of the final image, computed as:
\begin{equation} \label{eq:xhat}
  \hat x_0=x_t-\sigma_t v_\theta(x_t,t,c),
\end{equation}
where $\sigma_t$ is the noise level at timestep $t$. Since rectified flow is deterministic, these steering methods use the equivalent marginal-preserving stochastic form~\citep{mark2025feynman}.

\paragraph{Limitations of current methods.} The reward-only tilt in Eq.~\ref{eq:tilt} weights each candidate's selection probability during resampling according to its reward, without considering how the $K$ images differ. Proposition~\ref{prop:tilt} (proof in Appendix~\ref{app:theory}) states what happens when this tilt is applied to the current candidates during resampling.
\begin{proposition}[\textbf{As $\lambda\to\infty$, resampling produces $K$ copies of the highest-reward candidate}]
\label{prop:tilt}
At a resampling step with a unique highest-reward candidate, increasing $\lambda$ increases its expected number of copies. In the limit as $\lambda\to\infty$, the probability that all $K$ resampled candidates are copies of the highest-reward candidate approaches one.
\end{proposition}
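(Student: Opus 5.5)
We will fix a resampling step with current candidates $x_t^{1},\dots,x_t^{K}$ and rewards $r_k = r(\hat x_0^{k},c)$ evaluated on the estimates in Eq.~\ref{eq:xhat}. Let $k^\star$ be the unique maximizer, so that the gap $\Delta=r_{k^\star}-\max_{k\neq k^\star} r_k$ is strictly positive. We model resampling in the standard way used by FK steering: each of the $K$ new candidates is drawn from the categorical distribution with weights
\[
w_k(\lambda)=\frac{\exp(\lambda r_k)}{\sum_{j=1}^K \exp(\lambda r_j)}.
\]
The draws are independent (multinomial resampling). The number $N_{k^\star}$ of copies of $k^\star$ is then $\mathrm{Binomial}(K,w_{k^\star}(\lambda))$, so $\mathbb E[N_{k^\star}]=K\,w_{k^\star}(\lambda)$. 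This reduces both claims to properties of the single weight $w_{k^\star}(\lambda)$.

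For monotonicity, we write
\[
w_{k^\star}(\lambda)=\Bigl(1+\sum_{j\neq k^\star} e^{-\lambda(r_{k^\star}-r_j)}\Bigr)^{-1}.
\]
Each exponent $r_{k^\star}-r_j$ is strictly positive, so every term in the sum is strictly decreasing in $\lambda$. Hence $w_{k^\star}$ is strictly increasing, and so is $K w_{k^\star}$. Equivalently, differentiating gives $\tfrac{d}{d\lambda}\log w_{k^\star}=r_{k^\star}-\sum_j w_j r_j>0$, since the weighted mean of the rewards lies strictly below the maximum whenever some $w_j$ with $j\neq k^\star$ is positive. If $K=1$, the statement is trivial.

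For the limit, the same expression gives the bound $1-w_{k^\star}(\lambda)\le (K-1)e^{-\lambda\Delta}\to 0$. Under independent draws we have $\Pr[N_{k^\star}=K]=w_{k^\star}(\lambda)^K\ge 1-K(K-1)e^{-\lambda\Delta}$, which tends to $1$.

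The main obstacle is that FK steering may instead use a lower-variance scheme (systematic, stratified, or residual resampling), so the counts are no longer binomial. We would handle this by noting that all these schemes are unbiased, which gives $\mathbb E[N_{k^\star}]=K w_{k^\star}$ and settles the monotonicity claim. For the limit, we would use the fact that each scheme assigns every copy to $k^\star$ once $w_{k^\star}>1-1/K$: for systematic resampling, all $K$ grid points then fall in the interval of $k^\star$ (after checking the ordering convention), and for residual resampling, $\lfloor K w_{k^\star}\rfloor = K-1$ deterministic copies plus a residual draw dominated by $k^\star$. We would state the proposition for multinomial resampling and remark on this extension.
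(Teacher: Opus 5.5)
Your multinomial argument is correct. The term-by-term rewriting gives strict monotonicity of $w_{k^\star}$, and Bernoulli's inequality turns $w_{k^\star}^K$ into the bound $1-K(K-1)e^{-\lambda\Delta}$. The paper takes a scheme-agnostic route that never uses independence or the binomial law. It assumes only that resampling returns exactly $K$ candidates with $\mathbb E[N_k]=K\omega_k(\lambda)$, where $\omega_k\propto a_k e^{\lambda r_k}$ for arbitrary $\lambda$-independent factors $a_k>0$. Monotonicity then follows from the same rewriting you use. The limit follows from $\Pr[N_\ell\ge1]\le\mathbb E[N_\ell]$ plus a union bound over $\ell\ne k^\star$, which gives $\Pr[N_{k^\star}<K]\le K(1-\omega_{k^\star})\to0$. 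That single inequality covers multinomial, systematic, residual, and the SSP resampling used by DAS at once. Your binomial computation gives the exact value $w_{k^\star}^K$, which is slightly sharper, but only under independence. The explicit exponential rate is also available from the paper's bound, since $1-\omega_{k^\star}\le (K-1)\max_j(a_j/a_{k^\star})e^{-\lambda\Delta}$.

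The scheme-by-scheme extension you sketch contains a false claim: systematic resampling does \emph{not} assign every copy to $k^\star$ once $w_{k^\star}>1-1/K$. Take offset $U\sim\mathcal U[0,1/K)$ and write $\epsilon=1-w_{k^\star}<1/K$ for the non-$k^\star$ mass. That mass occupies at most two intervals at the ends of $[0,1)$. The first or last grid point lands in them with probability exactly $K\epsilon>0$, whatever the ordering convention. Your own residual-resampling description has the same feature: the single residual draw misses $k^\star$ with probability $K\epsilon$, which contradicts the blanket claim. For these schemes the all-copies probability is $1-K(1-w_{k^\star})$. It tends to one but is never one at finite $\lambda$, and it equals the paper's union bound exactly. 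The fix is to drop the case analysis and use the Markov-plus-union-bound step directly. That also lets you state the proposition for any unbiased resampler returning $K$ candidates, rather than only for multinomial resampling.
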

Consequently, a greater reward tilt can reduce the number of distinct candidates retained after resampling and thereby reduce the visual diversity among the generated images.

A second class of methods increases the \emph{diversity} of generated images. NegToMe~\citep{singh2024negtome}, for instance, increases the distance between the candidates' features. Being reward-agnostic, these methods do not constrain the sampling distribution toward higher rewards. Figure~\ref{fig:toy-problem} visualizes the limitations of reward-steering and diversity methods in a toy 2D setting (details in Appendix~\ref{ssec:app-toy}).

\begin{figure*}[ht]
\centering
\includegraphics[width=\textwidth]{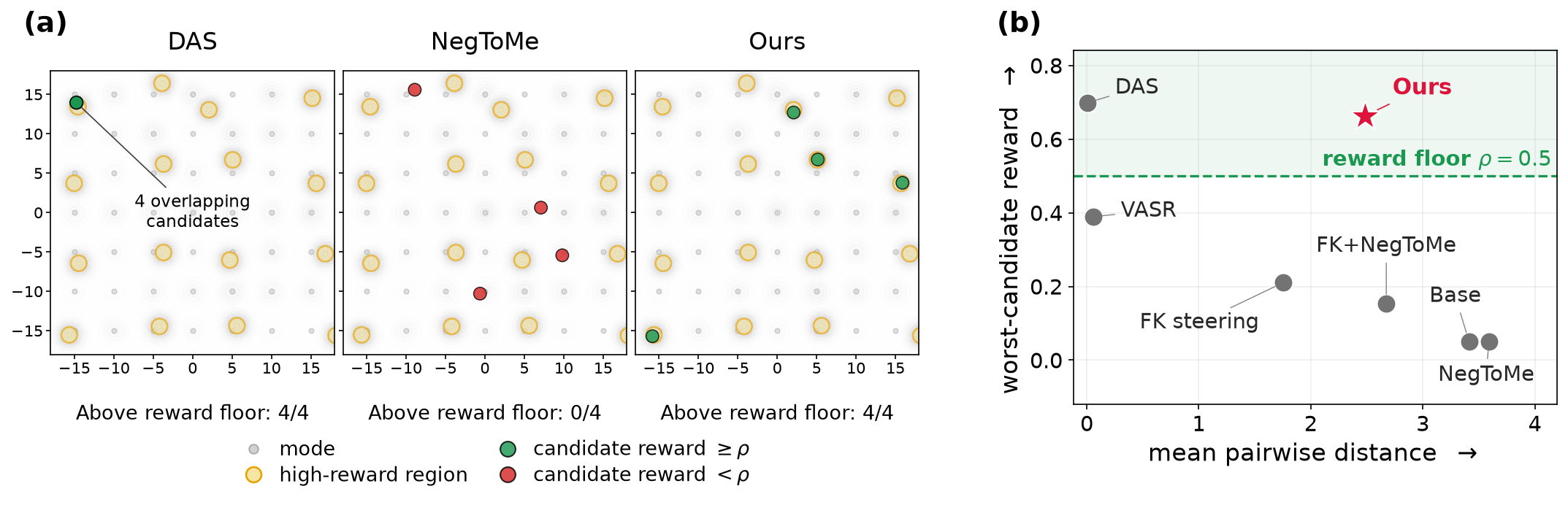}
\vspace{-6mm}
\caption{\textbf{In a toy setting, each baseline tends towards high diversity or high reward, not both.} Each method generates $K{=}4$ candidates from a frozen flow model trained on a mixture of Gaussians (details in Appendix~\ref{ssec:app-toy}). Reward is bounded in $(0,1]$, and the midpoint $\rho=0.5$ is a reward floor that every candidate must satisfy. \textbf{(a)} To visualize clearly the coverage of high-reward regions, candidates are 2D points on a Cartesian $x$--$y$ grid rather than images. We show one selected illustrative batch of $K{=}4$ candidates per method. DAS's candidates are all above $\rho$ but are not diverse; they cover only a single high-reward region (yellow circles). Conversely, NegToMe's candidates are diverse yet are all below $\rho$ (red circles). In these batches, only ours has all candidates above $\rho$ in distinct high-reward regions. \textbf{(b)} We plot worst-candidate reward against diversity (mean pairwise distance) averaged over 200 runs for each method. Methods in the shaded green region have average worst-candidate rewards greater than $\rho$. DAS achieves the highest reward but lowest diversity; NegToMe achieves the highest diversity but lowest reward.}
\label{fig:toy-problem}
\end{figure*}

\vspace{2mm}
A natural third alternative treats diversity as a second reward and samples from a joint tilt:
\begin{equation}\label{eq:joint}
  \pi_{\mathrm{joint}}(x_0^{1:K}\mid c) \propto \prod_{k=1}^K p_\theta(x_0^k\mid c)\,\exp\!\Big(\lambda\sum_{k=1}^K r(x_0^k,c) + \mu D(x_0^{1:K})\Big),\qquad \lambda,\mu\ge0,
\end{equation}
where $D$ is a batch-level diversity term and $\mu$ weights it against reward $r$. Because reward and diversity are aggregated, the joint objective can be gamed: for \(\lambda>0\), one unit of \(D\) offsets a decrease of \(\mu/\lambda\) in total reward. This exchange exists for every positive $\mu$, so $\pi_{\mathrm{joint}}$ never requires a candidate to be high-reward.

\section{Method}
\label{sec:method}

The approaches discussed in Section~\ref{sec:problem} do not jointly require a reward floor for every candidate and a minimum diversity for the whole batch. We therefore formulate generation as \emph{satisficing}~\citep{simon1956}: every candidate must satisfy a reward floor $\rho$, while the batch must satisfy a diversity requirement. These requirements define the target distribution
\begin{equation}
\label{eq:constraint}
  \pi^\star(x_0^{1:K}\mid c)\;\propto\;
  \Big(\prod_{k=1}^K p_\theta(x_0^k\mid c)\,\mathbf{1}[r(x_0^k,c)\ge\rho]\Big)
  \,\mathbf{1}[D(x_0^{1:K})\ge\delta],
\end{equation}
where $D$ measures batch diversity, $\delta$ is the required diversity level, and each indicator equals one when its condition holds and zero otherwise. Rejection sampling would draw $K$ candidates independently from $p_\theta$ and reject the entire batch if either requirement is violated. Because the probability that all $K$ candidates satisfy the reward floor decreases exponentially with $K$, rejection sampling can be prohibitively expensive.

We thus introduce \method, which replaces both indicators with differentiable penalties. Our training-free approach rests on two mechanisms: (1) a satisficing objective that increases low rewards and diversifies the batch; and (2) \copyrule, which replaces the latent of a low-reward candidate with that of a higher-reward candidate. Section~\ref{ssec:objective} details the objective and \copyrule and connects our objective to $\pi^\star$, while Section~\ref{ssec:frontier} proves that this objective traces the satisfaction--diversity Pareto frontier.

\subsection{Satisficing at Inference Time}
\label{ssec:objective}

Our objective $\mathcal J$ consists of a reward penalty and a diversity penalty, differentiable counterparts to $\pi^\star$'s two indicators. For a differentiable reward $r(\cdot)$, we minimize $\mathcal J$ over each candidate's latent $x_t^k$. At each step, \method compares each candidate's reward $r_k=r(\hat x_0^k, c)$ against a batch-relative reward cutoff $\tau$:
\begin{equation}\label{eq:tau}
  \tau = \max_k r_k - \Delta, \qquad \Delta > 0,
\end{equation}
where $\Delta$ is a fixed tolerance. Comparing against $\tau$ distinguishes low- from high-reward candidates in the batch. For candidate $k$, we use the sigmoid $q_k$ as a smooth probability that $r_k\geq\tau$:
\begin{equation}\label{eq:gate}
  q_k = \operatorname{sigmoid}\!\big((r_k - \tau)/\beta_r\big),
\end{equation}
where the temperature $\beta_r$ is computed once per run as the standard deviation of the batch's rewards; $q_k\rightarrow 1$ for rewards above $\tau$ and $q_k\rightarrow 0$ below $\tau$.

\emph{Reward Penalty.} We define a penalty for each candidate $k$ that increases with $\tau-r_k$; lower-reward candidates have larger penalties. Summing over the batch gives $\mathcal{J}_r$:
\begin{equation}\label{eq:rewpen}
  \mathcal{J}_r \triangleq -\sum_k \log q_k = \sum_k \operatorname{softplus}\!\big((\tau - r_k)/\beta_r\big) .
\end{equation}
Thus, $\mathcal{J}_r$ is the negative log-likelihood that every candidate in the batch exceeds $\tau$.

\emph{Diversity Penalty.} We measure batch diversity $D$ as the weighted mean of the pairwise cosine distances $d_{ij}$ between the pooled backbone features of candidates $i$ and $j$, each pair weighted by $q_i q_j$ (Eq.~\ref{eq:gate}). Consequently, pairs of higher-reward candidates receive greater weight than pairs with lower-reward candidates. To account for the scale of pairwise distances in each run, we set $\delta$ to the median of the $d_{ij}$, computed once per run. At each step, the diversity penalty $\mathcal{J}_d$ is then:
\begin{equation} \label{eq:divpen}
  \mathcal{J}_d \triangleq \operatorname{softplus}\!\big((\delta - D)/\beta_d\big),\quad \text{where}\quad D = \tfrac{\sum_{i<j} q_i q_j\, d_{ij}}{\sum_{i<j} q_i q_j},
\end{equation}
and $\beta_d$ is the diversity temperature. Combining the two penalties yields our objective:
\begin{equation}\label{eq:obj}
  \mathcal{J} = \mathcal{J}_r + \mathcal{J}_d = \sum_k \operatorname{softplus}\!\big((\tau - r_k)/\beta_r\big) \;+\; \operatorname{softplus}\!\big((\delta - D)/\beta_d\big).
\end{equation}
At intermediate denoising steps, we update the candidate latents using the gradients of $\mathcal J_d$ and $\mathcal J_r$ (scaling specified in Appendix~\ref{app:hyperparameters}). The objective gives greater weight to increasing reward and diversity below their cutoffs than above them. Proposition~\ref{prop:hard-limit} (proof in Appendix~\ref{app:theory}) connects the distribution induced by our objective to $\pi^\star$ (Eq.~\ref{eq:constraint}) in the zero-temperature limit. We evaluate the finite-step sampler empirically.

\begin{samepage}
\begin{proposition}[\textbf{The distribution induced by $\mathcal J$ converges to $\pi^\star$}]
\label{prop:hard-limit}
At any given timestep, let $M=\max_k r_k$ and set $\tau=\rho=M-\Delta$. Here, $D$ in $\pi^\star$ is the unweighted mean pairwise distance, obtained from the weighted mean in Eq.~\ref{eq:divpen} as all reward gates approach one. Suppose that independently sampling $K$ candidates from the base model has a nonzero chance of producing a batch with $r_k\ge\rho$ for every candidate $k$ and $D\ge\delta$ (Eq.~\ref{eq:constraint}). Suppose also that the event that $r_k=\rho$ for some $k$, and the event that $D=\delta$, each have probability zero. Then, as $\beta_r,\beta_d\to0$, the distribution induced by $\mathcal J$ converges to $\pi^\star$.
\end{proposition}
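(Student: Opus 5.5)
We first fix what "the distribution induced by $\mathcal J$" means, since everything else follows from it: the Gibbs-type tilt
\[
\pi_{\beta}(x_0^{1:K}\mid c)\;\propto\;\Big(\prod_{k=1}^K p_\theta(x_0^k\mid c)\Big)\exp\!\big(-\mathcal J(x_0^{1:K})\big),
\]
with $\beta=(\beta_r,\beta_d)$ and $\tau=\rho$ held fixed. Because $\exp(-\operatorname{softplus}(z))=\operatorname{sigmoid}(-z)$, the weight factorizes:
\[
e^{-\mathcal J}=\Big(\prod_k \operatorname{sigmoid}\!\big((r_k-\rho)/\beta_r\big)\Big)\,\operatorname{sigmoid}\!\big((D_\beta-\delta)/\beta_d\big).
\]
Here $D_\beta$ is the gated weighted mean of Eq.~\ref{eq:divpen}, which depends on $\beta_r$ through the gates $q_k$. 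Each factor is a smoothed version of one indicator in Eq.~\ref{eq:constraint}. The plan is therefore to show pointwise convergence of the weight to the product of indicators almost everywhere, and then pass to the normalized distributions by dominated convergence.

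\textbf{Pointwise limits.} For the reward factors, if $r_k>\rho$ then $(r_k-\rho)/\beta_r\to+\infty$ and the sigmoid tends to $1$. If $r_k<\rho$ it tends to $0$. The tie event $r_k=\rho$ has probability zero by hypothesis and can be discarded. For the diversity factor we restrict to the event $\{r_k>\rho\ \forall k\}$, since off this event the reward product already tends to $0$ and the diversity factor is bounded by $1$. On that event every $q_k\to1$, so $D_\beta\to D$, the unweighted mean pairwise distance, because the weighted mean is a continuous function of the gates and its denominator $\sum_{i<j}q_iq_j\to\binom K2>0$. Off the null event $D=\delta$, the quantity $(D_\beta-\delta)/\beta_d$ tends to $\pm\infty$ according to the sign of $D-\delta$. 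One point needs care: $D_\beta-\delta$ converges to a nonzero number, so dividing by $\beta_d\to0$ gives the correct infinite limit jointly in $(\beta_r,\beta_d)$, with no restriction on the relative rates. Hence $e^{-\mathcal J}\to\mathbf 1[r_k\ge\rho\ \forall k]\,\mathbf 1[D\ge\delta]$ almost surely under $\prod_k p_\theta$.

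\textbf{Normalization and convergence.} The weights lie in $[0,1]$, so dominated convergence with dominating function $1$ gives
\[
Z_\beta=\mathbb E_{\prod p_\theta}\big[e^{-\mathcal J}\big]\;\to\; Z^\star=\Pr\big(r_k\ge\rho\ \forall k,\ D\ge\delta\big).
\]
The nonzero-chance hypothesis gives $Z^\star>0$. Then for any bounded measurable $f$, applying dominated convergence again yields $\mathbb E_{\pi_\beta}[f]=\mathbb E[f e^{-\mathcal J}]/Z_\beta\to\mathbb E[f\,\mathbf 1\mathbf 1]/Z^\star=\mathbb E_{\pi^\star}[f]$. This gives weak convergence. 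In fact, because the densities with respect to $\prod p_\theta$ converge almost everywhere and integrate to one, Scheffé's lemma upgrades this to total-variation convergence.

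\textbf{Main obstacle.} The delicate step is the coupling between the gates and the diversity term: $D_\beta$ depends on $\beta_r$, and its limit must be taken on the correct event. I would handle this exactly as above, by splitting on the reward event before analysing the diversity factor, which avoids needing any uniformity. A secondary point is the modelling choice that $\tau$ and $\rho$ are treated as fixed at the given timestep, even though $M=\max_k r_k$ depends on the batch. I would state this explicitly, conditioning on the current $M$ as the proposition's phrase "at any given timestep" suggests, so that $\rho$ is a constant in the limit argument.
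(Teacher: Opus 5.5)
Your proposal is correct and follows the paper's proof essentially step for step. You use the same factorization of $e^{-\mathcal J}$ into reward gates and a diversity gate, the same split on whether some $r_k<\rho$ before analysing $D_{\beta_r}\to D_0$, the same nonzero-gap argument that makes the diversity gate's limit independent of the relative rates of $\beta_r$ and $\beta_d$, and dominated convergence for the normalizer. The only (cosmetic) difference is that you upgrade to total variation via Scheff\'e's lemma, whereas the paper applies dominated convergence directly using the bound $|w/Z-\mathbf 1_C/P_c(C)|\le 3/P_c(C)$, valid once $Z\ge P_c(C)/2$.
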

\end{samepage}

\textbf{Late-Stage Latent Replacement.} In practice, the gradient of $\mathcal{J}$ depends strongly on the structure of the latent space and the initial noise $x_1^k$, so candidates may remain in local reward optima throughout denoising. We thus propose \copyrule based on the intuition that, toward the end of denoising, it is preferable to copy from a known high-reward candidate than to continue updating a candidate that remains low reward. To see how reward may change over the remaining steps, we extrapolate each below-$\tau$ candidate's reward from the difference $r_k-r_k^{\mathrm{prev}}$ since the previous update. If this candidate's extrapolated reward is less than $\tau$, we replace its latent with that of a randomly chosen candidate above $\tau$. We apply \copyrule only in the latter half of denoising, since an early $\hat x_0^k$ estimate is a poor predictor of a candidate's final reward. After replacement, the two candidates become visually distinct by the end of denoising due to $\mathcal J_d$ and stochastic sampling. Algorithm~\ref{alg:method} details our full procedure.

\vspace{2mm}
\begin{algorithm}[!tbp]
\caption{\textbf{\method.}}
\label{alg:method}
\begin{algorithmic}[1]
\algtext*{EndIf}
\algtext*{EndFor}
\Require base model $p_\theta$, prompt $c$, reward $r$, batch size $K$, steps $T$, update steps $\mathcal G$, tolerance $\Delta$
\State sample $x^1,\dots,x^K \sim \mathcal{N}(0,I)$
\For{denoising step $s=1,\ldots,T$}
  \If{$s \in \mathcal{G}$}
    \ForAll{candidates $k$}
      \State Predict clean image $\hat x_0^k$ \Comment{\textit{Eq.~\eqref{eq:xhat}}}
      \State Compute reward $r_k = r(\hat x_0^k,c)$
    \EndFor
    \State Compute reward cutoff $\tau = \max_k r_k - \Delta$ \Comment{\textit{Eq.~\eqref{eq:tau}}}
    \State Using $\tau$, compute $\mathcal J_d$ and $\mathcal J_r$; update each $x^k$ using their gradients \Comment{\textit{Eq.~\eqref{eq:obj}}}
    \If{$s>T/2$ and $r_k^{\mathrm{prev}}$ exists for every $k$}
      \State $m=\lvert\{u\in\mathcal G:u>s\}\rvert$ \Comment{\textit{$m$ is the number of remaining update steps}}
      \State $\tilde r_k = r_k+m\cdot\max(r_k-r_k^{\mathrm{prev}},0)$, for every $k$ \Comment{\textit{Get extrapolated reward $\tilde r_k$}}
      \ForAll{$k$ such that $r_k<\tau$ and $\tilde r_k<\tau$}
        \State Sample $j\sim\mathcal U(\{i:r_i\ge\tau\})$
        \State Replace latent $x^k$ with $x^j$ \Comment{\textit{Late-stage latent replacement}}
      \EndFor
    \EndIf
    \State $r_k^{\mathrm{prev}}\gets r_k$, for every $k$ \Comment{\textit{Store rewards for the next update}}
  \EndIf
  \State Take next sampling step
\EndFor
\State \Return $x_0^{1:K}$
\end{algorithmic}
\end{algorithm}

\subsection{\method's Objective Traces the Satisfaction--Diversity Pareto Frontier}
\label{ssec:frontier}

By sweeping $\Delta$, \method's objective traces a Pareto frontier between reward and diversity. We now characterize how the reward constraint in $\pi^\star$ (Eq.~\ref{eq:constraint}) changes as the reward floor $\rho$ varies. Let $A_\rho=\{x:r(x,c)\ge\rho\}$ be the set of images that satisfy the floor, and let $p_\rho$ be the base output distribution restricted to $A_\rho$, preserving relative probabilities within this set. Because the reward constraint applies separately to each candidate, we rewrite $\pi^\star$ as a product of $K$ reward-restricted distributions and the batch diversity indicator:
\begin{equation}\label{eq:factorize}
  \pi^\star(x^{1:K}\mid c)\propto \Big(\prod_{k=1}^{K}p_\rho(x^k)\Big)\,\mathbf1[D(x^{1:K})\ge\delta],
  \qquad p_\rho(x)\propto p_\theta(x\mid c)\,\mathbf1[x\in A_\rho].
\end{equation}
We denote $\{p_\rho\}$ as the family of distributions obtained by varying the reward floor $\rho$. For each floor, Lemma~\ref{lem:iproj} (Appendix~\ref{app:theory}) shows two properties of $p_\rho$. First, among all distributions that assign probability one to $A_\rho$, $p_\rho$ uniquely minimizes the $\mathrm{KL}$ divergence to the base model. Second, the support of every such distribution with finite $\mathrm{KL}$ is contained in the support of $p_\rho$. Here, satisfaction means the highest reward floor satisfied with probability one. Theorem~\ref{thm:frontier} (proof in Appendix~\ref{app:theory}) establishes $\{p_\rho\}$ as the satisfaction--diversity Pareto frontier.

\begingroup
\setlength{\topsep}{6pt}
\begin{samepage}
\begin{theorem}[\textbf{The family $\{p_\rho\}$ forms the satisfaction--diversity Pareto frontier as $\rho$ varies}]
\label{thm:frontier}
Consider reward floors $\rho$ for which $p_\theta(A_\rho\mid c)>0$, and suppose diversity is nonincreasing under support restriction, strictly so for proper restrictions. Every candidate drawn from $p_\rho$ has reward at least $\rho$. For any two such reward floors $\rho'$ and $\rho$ with $\rho'\ge\rho$, $p_{\rho'}$ has diversity at most that of $p_\rho$. For each $\rho$, no distribution over $A_\rho$ at finite KL to $p_\theta$ is more diverse than $p_\rho$. Therefore, $\{p_\rho\}$ forms the satisfaction--diversity Pareto frontier among distributions with finite guaranteed reward and finite KL to $p_\theta$.
\end{theorem}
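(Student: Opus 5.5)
The plan is to prove the four claims in order and then assemble them into the Pareto statement. Throughout, ``diversity'' of a distribution $q$ is a functional $\mathrm{Div}(q)$. By hypothesis it is nonincreasing under support restriction: if $q'$ is obtained from $q$ by restricting to a subset of its support, then $\mathrm{Div}(q')\le\mathrm{Div}(q)$, with strict inequality when the restriction is proper. ``Satisfaction'' of $q$ is $S(q)=\sup\{\rho: q(A_\rho)=1\}$. The first claim is immediate from the definition $p_\rho\propto p_\theta\mathbf 1[A_\rho]$. Normalization is valid because $p_\theta(A_\rho\mid c)>0$, and $p_\rho$ assigns probability one to $A_\rho$, so every candidate has $r\ge\rho$ and $S(p_\rho)\ge\rho$.

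For monotonicity, take $\rho'\ge\rho$. Then $A_{\rho'}\subseteq A_\rho$, and
\[
p_{\rho'}(x)\propto p_\rho(x)\,\mathbf 1[x\in A_{\rho'}],
\]
so $p_{\rho'}$ is exactly a support restriction of $p_\rho$, preserving relative probabilities. The hypothesis gives $\mathrm{Div}(p_{\rho'})\le\mathrm{Div}(p_\rho)$.

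For maximality at a fixed floor, let $q$ be any distribution with $q(A_\rho)=1$ and $\mathrm{KL}(q\|p_\theta)<\infty$. By the second part of Lemma~\ref{lem:iproj}, $\operatorname{supp} q\subseteq\operatorname{supp} p_\rho$. I then need to compare $q$ with $p_\rho$ through the support-restriction hypothesis, and this is the main obstacle. The hypothesis literally concerns restrictions of a single distribution, not arbitrary reweightings on a smaller support. I would read it in the sense the paper intends: diversity depends on the distribution only through its support, or at least it is maximized over distributions with a given support by the base-proportional one. The comparison then chains as follows: $q$ has support contained in that of $p_\rho$, so $q$ is dominated by the restriction of $p_\rho$ to $\operatorname{supp} q$, which is in turn dominated by $p_\rho$. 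I would state this interpretation explicitly as the working form of the assumption. If only the literal form is available, I would fall back on the first part of Lemma~\ref{lem:iproj}, that $p_\rho$ is the unique KL minimizer. That makes $p_\rho$ the canonical representative of its support class, so the comparison holds among the representatives.

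Finally, the Pareto assembly. Suppose $q$ has finite guaranteed reward $S(q)=s$ and finite KL, and suppose it dominates $p_\rho$: $S(q)\ge\rho$ and $\mathrm{Div}(q)\ge\mathrm{Div}(p_\rho)$, with at least one inequality strict. Since $q(A_{\rho''})=1$ for every $\rho''<s$, I would take the floor $\rho''=\max(\rho,s-\epsilon)$, or $\rho''=s$ itself if the supremum is attained. Then $q$ is supported on $A_{\rho''}$, and the previous step gives $\mathrm{Div}(q)\le\mathrm{Div}(p_{\rho''})\le\mathrm{Div}(p_\rho)$. Strict domination must therefore come from satisfaction, $s>\rho$. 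Taking $\rho''\in(\rho,s)$ with $p_\theta(A_{\rho''})>0$ (positivity holds because $q\ll p_\theta$ and $q(A_{\rho''})=1$), I need $p_{\rho''}$ to be a proper restriction of $p_\rho$ to get $\mathrm{Div}(p_{\rho''})<\mathrm{Div}(p_\rho)$, a contradiction. If instead $A_{\rho''}$ and $A_\rho$ agree up to $p_\theta$-null sets, then $p_\rho$ itself already has satisfaction above $\rho$, and the frontier point labeled $\rho$ coincides with $p_{\rho''}$, so nothing is lost. Handling this tie case and the attainment of the supremum is routine bookkeeping. The genuine difficulty remains the interpretation of the diversity hypothesis in the maximality step.
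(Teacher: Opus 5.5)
Your overall route is the paper's: Lemma~\ref{lem:iproj} for support containment, a support-based comparison of diversity, and a proper restriction at an intermediate floor for strictness. The reading of the diversity hypothesis that you flag as the main obstacle is exactly the convention the paper adopts: diversity compares distributions by inclusion of supports, so equal supports have equal diversity. Your primary reading is therefore the right one. Drop the ``literal-form'' fallback via KL-uniqueness, since it only compares $p_\rho$ with other members of the family and does not show that an arbitrary $q$ on $A_\rho$ is no more diverse.

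The real problems are in the Pareto assembly. First, you test domination of $p_\rho$ against the label $\rho$ ($S(q)\ge\rho$), but it must be against $S(p_\rho)$. That quantity exceeds $\rho$ whenever $p_\theta$ puts no mass on rewards in some interval $[\rho,a)$ with $a>\rho$; under your definition $q=p_\rho$ then ``dominates'' itself. This is what surfaces as your tie case, and as written that case derives no contradiction. If $s=S(q)>S(p_\rho)$, you must choose $\rho''\in(S(p_\rho),s)$, not an arbitrary point of $(\rho,s)$. Then $p_\rho(A_{\rho''})<1$ by definition of the supremum, so $p_\theta(A_\rho\setminus A_{\rho''})=Z_\rho\,p_\rho(R<\rho'')>0$. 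The restriction is automatically proper and the tie case disappears. You also need $S(\cdot)$ to be finite and attained, which you defer as bookkeeping but which is used both here and below. It is a two-line continuity argument: $\bigcap_n A_n=\emptyset$ gives finiteness, and $A_{a-1/n}\downarrow A_a$ gives $q(A_{S(q)})=1$. This also makes your $\epsilon$ device unnecessary.

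Second, you only show that each $p_\rho$ is undominated. The claim that $\{p_\rho\}$ \emph{forms} the frontier also requires that no undominated pair in the comparison class is missing from the family; this is the paper's Step~4, and it is absent from your plan. It follows from your own fixed-floor step. For any $q$ with finite KL and $s=S(q)$, attainment gives $q(A_s)=1$ and finite KL gives $p_\theta(A_s)>0$. So $p_s$ exists, and $\mathrm{Div}(q)\le\mathrm{Div}(p_s)$. Moreover $S(p_s)=s$. If $p_s$ guaranteed some $b>s$, then $p_\theta(A_s\setminus A_b)=0$, hence $q(A_s\setminus A_b)=0$ by absolute continuity. Then $q(A_b)=1$, contradicting the definition of $s$. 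So every $q$ in the class is weakly dominated by a member of the family with the same satisfaction. Any undominated $q$ therefore shares its satisfaction--diversity pair with $p_s$.
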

\end{samepage}
\endgroup

The theorem applies to diversity measures satisfying the stated support-restriction assumption; our experiments evaluate the practical satisfaction--diversity tradeoff using mean pairwise DreamSim. Reward steering instead targets the tilted distributions $p_\lambda$. For finite $\lambda$ and any reward floor $\rho$ with $0<p_\theta(A_\rho\mid c)<1$, Corollary~\ref{cor:tiltdom} (Appendix~\ref{app:theory}) shows that $p_\lambda$ assigns positive probability to images below $\rho$, unlike $p_\rho$. Thus, varying $\lambda$ traces $\{p_\lambda\}$ rather than the satisfaction--diversity Pareto frontier established by $\{p_\rho\}$.

\section{Experiments}
\label{sec:experiments}

\textbf{Models, datasets, and baselines.} We show results with three frozen base models: FLUX.1-dev~\citep{flux2024}, SANA-1.6B~\citep{xie2025sana}, and Stable Diffusion~1.5 (SD1.5)~\citep{rombach2022high}. We use two reward models, HPSv3~\citep{wu2025hpsv3} and ImageReward~\citep{xu2023imagereward}, and evaluate on the Pick-a-Pic~\citep{kirstain2023pickapic} and HPDv2~\citep{wu2023human} datasets. Our main baselines fall into two categories: 1) reward-steering methods, namely Feynman--Kac (FK) steering~\citep{singhal2025fk}, VASR~\citep{shekhar2026vasr}, and DAS~\citep{kim2025das}; and 2) diversity methods, namely Negative Token Merging (NegToMe)~\citep{singh2024negtome}, which repels candidates in feature space. We also evaluate FK+NegToMe, which combines FK steering with NegToMe's diversity mechanism. All methods use the same seeds and return $K{=}4$ images (i.e., candidates) per prompt. We report SGI~\citep{parmar2026scaling} separately in Appendix~\ref{ssec:app-sgi} because it prunes a much larger pool of $128$ candidates. Baselines use their recommended settings; sampling details are provided in Appendix~\ref{app:hyperparameters}.

\textbf{Metrics.} We evaluate images on two axes: reward and diversity. On the reward axis, every method is scored against a reward floor $\rho$. Rather than choosing $\rho$ arbitrarily, we set it to the base model's median reward over all prompts. We report worst-candidate reward $\min_k r_k$, averaged over prompts, since a batch satisfies the floor $\rho$ precisely when $\min_k r_k \ge \rho$; we also report mean reward. We define the satisfaction rate $\mathrm{SAT}_\rho$ as the fraction of generated candidates whose reward is at least $\rho$:
\begin{equation}
\label{eq:satrate}
  \mathrm{SAT}_\rho \;=\; \frac{1}{N\cdot K}\sum_{i=1}^{N\cdot K} \mathbf{1}\!\left[\,r_i \ge \rho\,\right],
\end{equation}
where $r_i$ is the reward of the $i$-th generated candidate and $\mathbf{1}[\cdot]$ is the indicator function. $N$ is the number of prompts, with $K$ candidates per prompt. Since $\rho$ is the median reward of the base model's outputs, $\mathrm{SAT}_\rho\!=\!0.5$ indicates parity with the base model. On the diversity axis, each batch is measured by the mean pairwise DreamSim~\citep{fu2023dreamsim} and LPIPS~\citep{zhang2018unreasonable} among its $K$ candidates. We additionally report PickScore~\citep{kirstain2023pickapic} as a held-out reward, which indicates whether reward gains reflect a broader improvement in preference.

\subsection{\method Traverses the Satisfaction--Diversity Frontier}
\label{ssec:exp-frontier}

Sweeping the tolerance $\Delta$ of our satisficing objective (Eq.~\ref{eq:obj}) traces a frontier curve between worst-candidate reward and diversity. As shown in Figure~\ref{fig:corner} on FLUX and SANA, baseline methods tend to exhibit either high worst-candidate reward but low diversity (DAS, VASR), or low worst-candidate reward and high diversity (NegToMe). Although varying FK steering's reward-tilt parameter \(\lambda\) also traces a reward--diversity curve, \method's curve Pareto-dominates~\citep{hwang2012multiple} FK steering's over their common DreamSim ranges. At matched DreamSim, \method has higher worst-candidate reward at every overlapping point, with improvements of up to \(0.43\) on FLUX using HPSv3 and \(0.70\) on SANA using ImageReward.

\begin{figure}[!htbp]
\centering
\includegraphics[width=0.49\textwidth]{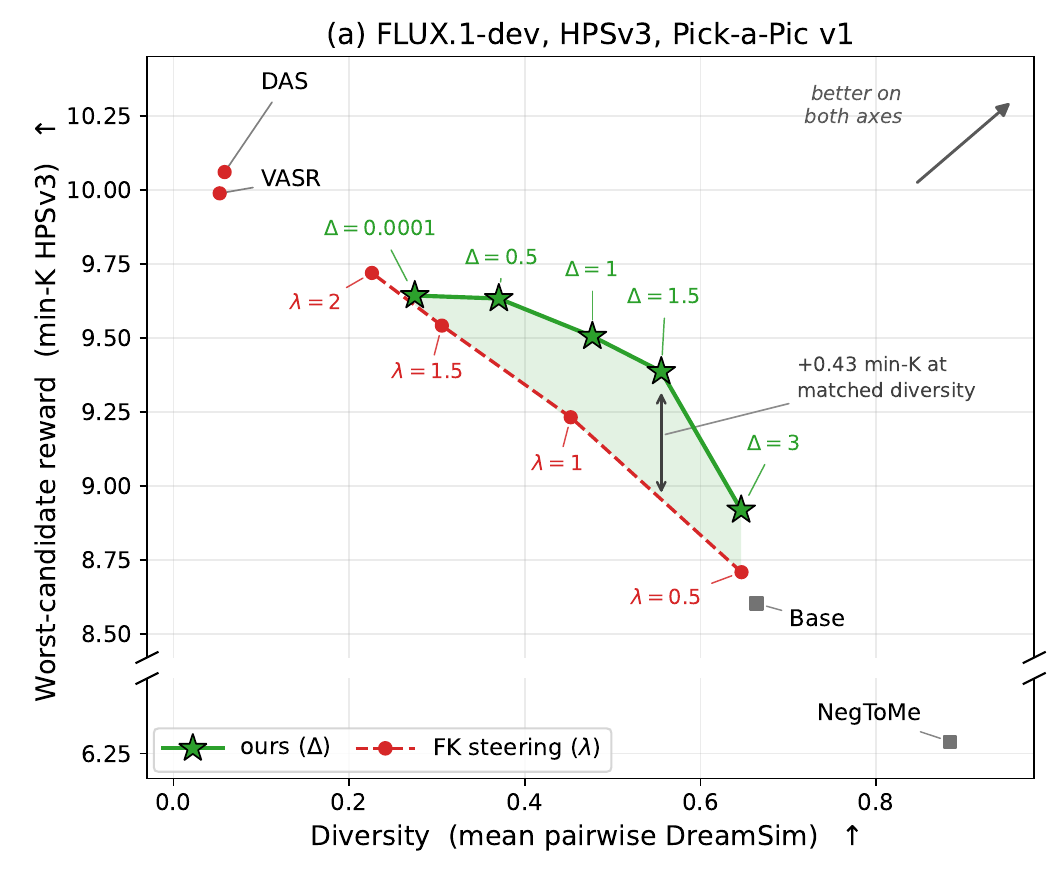}
\hfill
\includegraphics[width=0.49\textwidth]{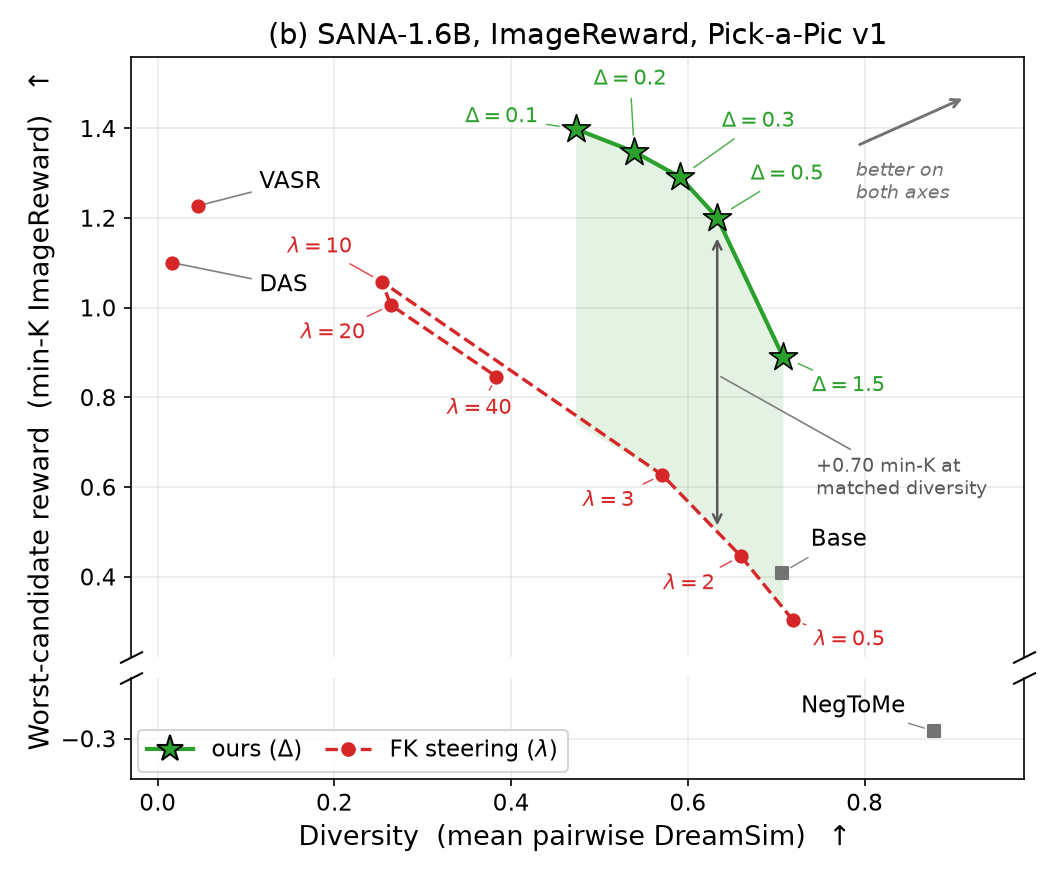}
\vspace{-2mm}
\small
\caption{\textbf{\method achieves better worst-candidate reward at matched diversity versus FK steering.} We plot mean worst-candidate reward $\min_k r_k$ against mean pairwise DreamSim, on Pick-a-Pic. We sweep \method and FK steering on two settings. \textbf{(a)} FLUX.1-dev is the base model and HPSv3 is the reward, with $\Delta$ from $0.0001$ to $3$ and $\lambda$ from $0.5$ to $2$. \textbf{(b)} SANA-1.6B is the base model and ImageReward is the reward, with $\Delta$ from $0.1$ to $1.5$ and $\lambda$ from $0.5$ to $40$. The shaded band represents the worst-candidate reward gap between the two curves at the same diversity, marked at $\Delta{=}1.5$ in (a) where it is $\approx\!0.43$ on FLUX HPSv3, and at $\Delta{=}0.5$ in (b) where it is $\approx\!0.70$ on SANA ImageReward. We note that the top right region, representing high reward and high diversity, is ideal. DAS and VASR collapse to diversity below $0.06$ in both settings, and NegToMe is the most diverse but its worst candidate on average falls below the base model's. \method traces a better empirical frontier than FK steering between these extremes.}
\label{fig:corner}
\end{figure}

\begin{table}[!htbp]
\centering
\small
\caption{\textbf{On Pick-a-Pic, \method is competitive on satisfaction rate without collapsing diversity.} We use FLUX.1-dev with HPSv3 as the reward. DAS has the highest satisfaction rate at $0.612$ but a mean pairwise DreamSim of only $0.059$, while NegToMe is the most diverse at $0.885$ and has the lowest satisfaction rate at $0.276$. \method attains a satisfaction rate of $0.552$ at a DreamSim of $0.555$, and matches base FLUX on both held-out rewards. We use $\rho{=}9.66$; $\pm$ denotes 95\% CIs.}
\vspace{1mm}
\label{tab:flux-pickapic}
\setlength{\tabcolsep}{4pt}
\begin{tabular*}{\textwidth}{@{\extracolsep{\fill}}l ccc cc cc@{}}
\toprule
& \multicolumn{3}{c}{Satisfaction--diversity} & \multicolumn{2}{c}{Additional metrics} & \multicolumn{2}{c}{Held-out} \\
\cmidrule(lr){2-4}\cmidrule(lr){5-6}\cmidrule(lr){7-8}
Method & min $\uparrow$ & DreamSim $\uparrow$ & $\mathrm{SAT}_\rho$ $\uparrow$ & mean $\uparrow$ & LPIPS $\uparrow$ & ImageReward & PickScore \\
\midrule
Base               & $8.60$ & $0.663${\scriptsize$\,\pm0.013$} & $0.500${\scriptsize$\,\pm0.038$} & $9.65$ & $0.643${\scriptsize$\,\pm0.008$} & $1.02$ & $22.28$ \\
NegToMe            & $6.30$ & $0.885${\scriptsize$\,\pm0.009$} & $0.276${\scriptsize$\,\pm0.027$} & $8.17$ & $0.788${\scriptsize$\,\pm0.004$} & $0.79$ & $21.49$ \\
\midrule
FK steering        & $9.54$ & $0.306${\scriptsize$\,\pm0.021$} & $0.570${\scriptsize$\,\pm0.041$} & $10.07$ & $0.291${\scriptsize$\,\pm0.021$} & $1.09$ & $22.37$ \\
FK + NegToMe       & $7.69$ & $0.360${\scriptsize$\,\pm0.017$} & $0.333${\scriptsize$\,\pm0.034$} & $8.70$ & $0.310${\scriptsize$\,\pm0.016$} & $0.99$ & $21.68$ \\
VASR               & $9.99$ & $0.053${\scriptsize$\,\pm0.004$} & $0.598${\scriptsize$\,\pm0.042$} & $10.17$ & $0.033${\scriptsize$\,\pm0.005$} & $1.10$ & $22.40$ \\
DAS                & $10.06$ & $0.059${\scriptsize$\,\pm0.007$} & $0.612${\scriptsize$\,\pm0.041$} & $10.26$ & $0.040${\scriptsize$\,\pm0.007$} & $1.09$ & $22.38$ \\
\midrule
\method (Ours)      & $9.39$ & $0.555${\scriptsize$\,\pm0.01$} & $0.552${\scriptsize$\,\pm0.039$} & $10.03$ & $0.531${\scriptsize$\,\pm0.015$} & $1.04$ & $22.32$ \\
\bottomrule
\end{tabular*}
\end{table}

\begin{table}[!htbp]
\vspace{-2mm}
\centering
\small
\caption{\textbf{On HPDv2, \method is competitive on satisfaction rate without collapsing diversity.} We use FLUX.1-dev with HPSv3 as the reward. \method attains a satisfaction rate of $0.553$ at a mean pairwise DreamSim of $0.567$, against DAS's $0.580$ at $0.082$, and matches base FLUX on both held-out rewards. $\mathrm{SAT}_\rho$ uses floor $\rho{=}13.93$. $\pm$ denotes 95\% confidence intervals.}
\label{tab:flux}
\vspace{1mm}
\setlength{\tabcolsep}{4pt}
\begin{tabular*}{\textwidth}{@{\extracolsep{\fill}}l ccc cc cc@{}}
\toprule
& \multicolumn{3}{c}{Satisfaction--diversity} & \multicolumn{2}{c}{Additional metrics} & \multicolumn{2}{c}{Held-out} \\
\cmidrule(lr){2-4}\cmidrule(lr){5-6}\cmidrule(lr){7-8}
Method & min $\uparrow$ & DreamSim $\uparrow$ & $\mathrm{SAT}_\rho$ $\uparrow$ & mean $\uparrow$ & LPIPS $\uparrow$ & ImageReward & PickScore \\
\midrule
Base               & $12.34$ & $0.655${\scriptsize$\,\pm0.004$} & $0.500${\scriptsize$\,\pm0.015$} & $13.36$ & $0.646${\scriptsize$\,\pm0.003$} & $1.11$ & $22.73$ \\
NegToMe            & $9.65$ & $0.865${\scriptsize$\,\pm0.003$} & $0.309${\scriptsize$\,\pm0.011$} & $12.02$ & $0.781${\scriptsize$\,\pm0.002$} & $0.91$ & $21.95$ \\
\midrule
FK steering        & $13.44$ & $0.197${\scriptsize$\,\pm0.006$} & $0.565${\scriptsize$\,\pm0.016$} & $13.77$ & $0.182${\scriptsize$\,\pm0.007$} & $1.17$ & $22.79$ \\
FK + NegToMe       & $11.53$ & $0.312${\scriptsize$\,\pm0.005$} & $0.320${\scriptsize$\,\pm0.013$} & $12.50$ & $0.277${\scriptsize$\,\pm0.005$} & $1.06$ & $22.04$ \\
VASR               & $13.62$ & $0.053${\scriptsize$\,\pm0.002$} & $0.569${\scriptsize$\,\pm0.017$} & $13.78$ & $0.033${\scriptsize$\,\pm0.002$} & $1.17$ & $22.79$ \\
DAS                & $13.64$ & $0.082${\scriptsize$\,\pm0.003$} & $0.580${\scriptsize$\,\pm0.017$} & $13.82$ & $0.066${\scriptsize$\,\pm0.004$} & $1.16$ & $22.80$ \\
\midrule
\method (Ours)      & $13.07$ & $0.567${\scriptsize$\,\pm0.006$} & $0.553${\scriptsize$\,\pm0.015$} & $13.71$ & $0.559${\scriptsize$\,\pm0.006$} & $1.14$ & $22.77$ \\
\bottomrule
\end{tabular*}
\end{table}

\begin{samepage}
On Pick-a-Pic, Table~\ref{tab:flux-pickapic} shows that \method improves mean HPSv3 reward over base FLUX ($10.03$ against $9.65$). Although DAS and VASR achieve slightly higher mean rewards ($10.26$ and $10.17$ compared to \method's $10.03$), both are markedly less diverse than \method. The same satisfaction--diversity pattern holds on HPDv2 in Table~\ref{tab:flux}, where \method achieves a competitive satisfaction rate and mean HPSv3 reward with notably higher diversity than the reward-steering methods. In particular, adding NegToMe to FK steering increases diversity but reduces the satisfaction rate below the base model's $0.5$, whereas \method maintains above-base satisfaction at near-base diversity. Appendix~\ref{app:experiments} evaluates \method across additional settings and provides ablation and computational analyses.

\end{samepage}

\paragraph{Generalization to Other Base Models.} To show \method is not specific to one architecture or reward model, we extend to SANA-1.6B (flow matching, Table~\ref{tab:sana-ir}) and SD1.5 (UNet, Appendix~\ref{app:experiments} Table~\ref{tab:sd15-ir}) with ImageReward~\citep{xu2023imagereward}. On SANA, \method at $\Delta\!=\!0.3$ achieves the highest worst-candidate reward and satisfaction rate of any evaluated method, with $1.29$ worst-candidate reward versus VASR's $1.23$ and $0.735$ satisfaction rate versus VASR's $0.651$. \method also achieves a DreamSim of $0.591$, versus VASR's $0.046$.

\begin{table}[!htbp]
\centering
\small
\caption{\textbf{\method exceeds every reward-steering method on worst-candidate reward and diversity on SANA-1.6B.} We use Pick-a-Pic with ImageReward; the reward floor is $\rho\!=\!1.19$. Our method is shown at three $\Delta$ settings, tracing its satisfaction--diversity frontier as reflected in Figure~\ref{fig:corner}b. At \(\Delta=0.3\), \method attains a worst-candidate reward of \(1.29\) and DreamSim of \(0.591\), exceeding every reward-steering method on both metrics. $\pm$ denotes 95\% confidence intervals.}
\label{tab:sana-ir}
\vspace{1mm}
\small
\setlength{\tabcolsep}{3.9pt}
\begin{tabular*}{\textwidth}{@{\extracolsep{\fill}}l ccc cc cc@{}}
\toprule
& \multicolumn{3}{c}{Satisfaction--diversity} & \multicolumn{2}{c}{Additional metrics} & \multicolumn{2}{c}{Held-out} \\
\cmidrule(lr){2-4}\cmidrule(lr){5-6}\cmidrule(lr){7-8}
Method & min $\uparrow$ & DreamSim $\uparrow$ & $\mathrm{SAT}_\rho$ $\uparrow$ & mean $\uparrow$ & LPIPS $\uparrow$ & HPSv3 & PickScore \\
\midrule
Base                  & $0.41$ & $0.706${\scriptsize$\,\pm0.012$} & $0.500${\scriptsize$\,\pm0.037$} & $0.91$ & $0.644${\scriptsize$\,\pm0.010$} & $7.53$ & $21.93$ \\
NegToMe               & $-0.29$ & $0.879${\scriptsize$\,\pm0.011$} & $0.380${\scriptsize$\,\pm0.033$} & $0.55$ & $0.743${\scriptsize$\,\pm0.009$} & $5.98$ & $21.29$ \\
\midrule
FK steering           & $1.06$ & $0.254${\scriptsize$\,\pm0.022$} & $0.625${\scriptsize$\,\pm0.041$} & $1.18$ & $0.204${\scriptsize$\,\pm0.021$} & $7.98$ & $22.06$ \\
FK + NegToMe          & $0.84$ & $0.386${\scriptsize$\,\pm0.025$} & $0.585${\scriptsize$\,\pm0.041$} & $1.04$ & $0.278${\scriptsize$\,\pm0.022$} & $7.07$ & $21.63$ \\
VASR                  & $1.23$ & $0.046${\scriptsize$\,\pm0.003$} & $0.651${\scriptsize$\,\pm0.041$} & $1.25$ & $0.023${\scriptsize$\,\pm0.003$} & $8.01$ & $22.09$ \\
DAS                   & $1.10$ & $0.016${\scriptsize$\,\pm0.001$} & $0.599${\scriptsize$\,\pm0.043$} & $1.11$ & $0.004${\scriptsize$\,\pm0.001$} & $7.72$ & $21.93$ \\
\midrule
\method ($\Delta{=}0.3$)  & $1.29$ & $0.591${\scriptsize$\,\pm0.017$} & $0.735${\scriptsize$\,\pm0.037$} & $1.39$ & $0.547${\scriptsize$\,\pm0.016$} & $7.74$ & $21.97$ \\
\method ($\Delta{=}0.5$)  & $1.20$ & $0.633${\scriptsize$\,\pm0.015$} & $0.696${\scriptsize$\,\pm0.038$} & $1.33$ & $0.582${\scriptsize$\,\pm0.014$} & $7.67$ & $21.95$ \\
\method ($\Delta{=}0.75$) & $1.13$ & $0.658${\scriptsize$\,\pm0.015$} & $0.674${\scriptsize$\,\pm0.039$} & $1.28$ & $0.600${\scriptsize$\,\pm0.013$} & $7.63$ & $21.92$ \\
\bottomrule
\end{tabular*}
\end{table}

We further test whether stopping reward maximization once a candidate satisfies the floor, without an explicit diversity term, is sufficient to preserve diversity. We therefore cap DAS's reward at \(\rho\) while leaving its other settings unchanged. This reward capping increases DAS's DreamSim from \(0.016\) to \(0.066\), but the resulting diversity remains far below \method's \(0.658\); the change in satisfaction rate is not statistically significant. Thus, reward capping alone does not account for \method's diversity (Appendix~\ref{ssec:app-dasclip}).

\subsection{Qualitative Results}
\label{ssec:exp-qualitative}

\begin{figure}[H]
\centering
\includegraphics[width=\textwidth]{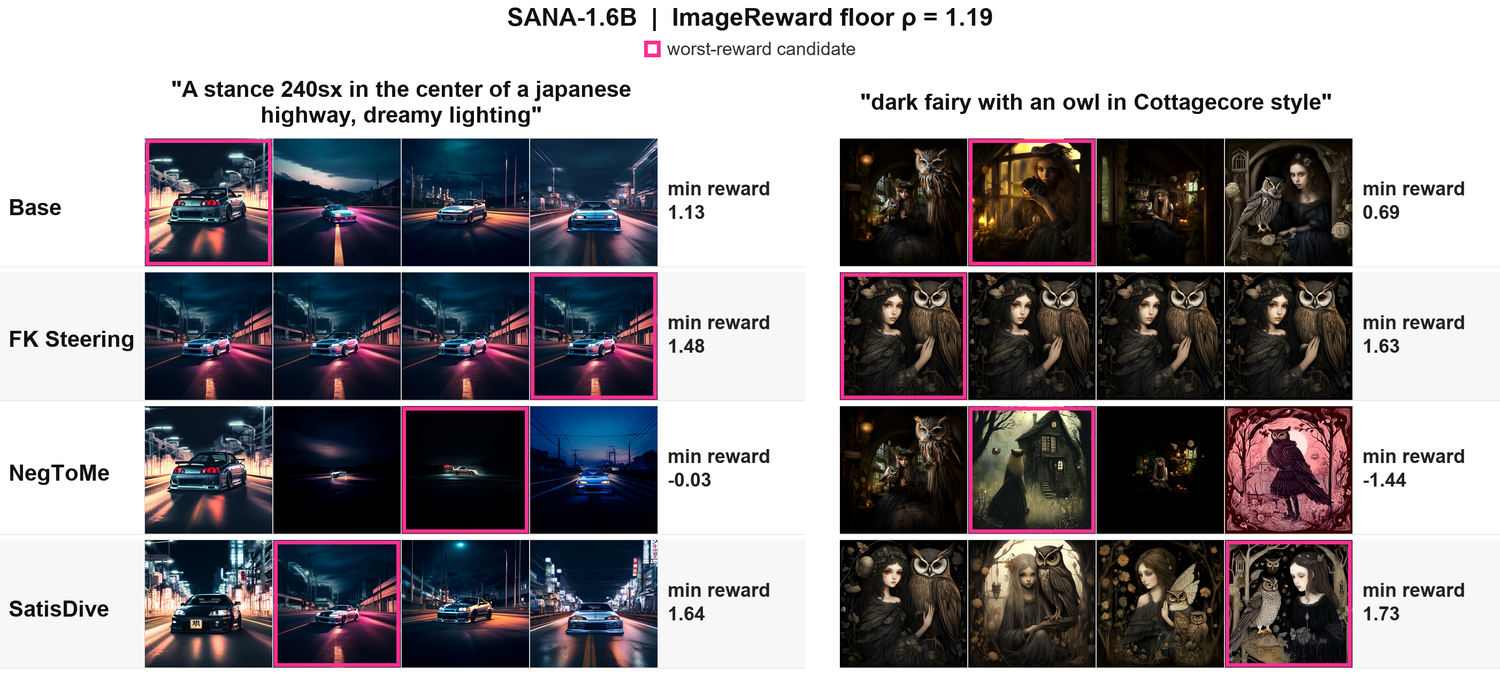}
\vspace{-6mm}
\small
\caption{\textbf{On two Pick-a-Pic prompts, only \method returns four visually distinct images that are all above the reward floor.} We use SANA-1.6B with ImageReward and matched starting seeds; the reward floor is $\rho{=}1.19$. The magenta border marks the worst-reward candidate in each row, whose reward is reported on the right. Base and NegToMe produce visually varied batches, but their worst-reward candidates do not satisfy the reward floor. NegToMe's worst-reward car image (on the left) is nearly black, and its worst-reward fairy image (on the right) does not clearly show an owl. FK steering's images exceed the reward floor, but its four images repeat nearly the same composition.}
\label{fig:sana-qual}
\end{figure}

\begin{samepage}
Figure~\ref{fig:sana-qual} shows individual batches from the SANA evaluation summarized in Figure~\ref{fig:corner}. FK steering returns four images above the reward floor but repeats nearly the same composition, while Base and NegToMe produce more varied batches in which some images fall below the floor. On both prompts, \method is the only method that returns four above-floor images with visibly different compositions. Additional qualitative results appear in Appendix~\ref{app:qualitative}.

\end{samepage}

\FloatBarrier

\section{Related Work}
\label{sec:related}

\textbf{Reward-driven sampling.} Existing methods increase reward through resampling~\citep{singhal2025fk,wu2023tds,kim2025das,shekhar2026vasr}, value-based lookahead~\citep{li2025svdd,jain2025dts}, stochastic transitions~\citep{holderrieth2026glass}, adaptive model evaluation~\citep{kim2026inferencetime}, or noise-space optimization and regularization~\citep{hwang2026ntrk,tang2025inferencetime,eyring2024reno,harrington2026never,zhai2025mira}. These methods do not explicitly optimize batch diversity and may collapse candidate lineages~\citep{shekhar2026vasr}. CREPE~\citep{he2026crepe} uses replica exchange to preserve diversity under reward tilting (comparison in Appendix~\ref{ssec:app-crepe}), while SGI~\citep{parmar2026scaling} selects a reward--diversity batch from a large candidate pool; neither imposes a reward floor on every candidate.

\Needspace{4\baselineskip}
\textbf{Reward-free diversity.} NegToMe~\citep{singh2024negtome}, CNO~\citep{kim2026diverse}, Particle Guidance~\citep{corso2023particle}, and EDDY~\citep{vinograd2026eddy} repel candidates in feature or prediction space. Other methods vary conditioning, intermediate features, or initial noise~\citep{sadat2024cads,yadav2026stride,li2026divin}, while general guidance modifies model predictions, attention, or guidance schedules~\citep{ho2021classifierfree,ahn2024self,hong2024smoothed,karras2024guiding,kynkaanniemi2024applying,yu2023freedom}. These methods increase diversity without imposing a reward floor.

\textbf{Pareto and satisficing objectives.} AIG~\citep{jena2025elucidating} studies average reward against distributional coverage; PROUD~\citep{yao2024proud} defines Pareto objectives over per-image properties; and ParetoSlider~\citep{golan2026paretoslider}, constrained diffusion alignment~\citep{khalafi2026composition}, and SITAlign~\citep{chehade2025bounded} optimize expected rewards. In contrast, our training-free formulation imposes a reward floor on every candidate and a diversity requirement on the batch, producing a frontier between worst-candidate reward and batch diversity.

\section{Conclusion}\label{sec:conclusion}

\method formulates image generation as satisficing: every candidate must satisfy a reward floor, and the batch must satisfy a diversity cutoff. Varying the floor provably traces a satisfaction--diversity Pareto frontier. On FLUX.1-dev with HPSv3 and SANA-1.6B with ImageReward, \method's curves Pareto-dominate FK steering's over their overlapping DreamSim ranges, improving worst-candidate reward at matched DreamSim by up to $0.43$ and $0.70$, respectively. Extending the formulation to multiple reward floors remains future work.

\clearpage
\appendix
\addtocontents{toc}{\protect\setcounter{tocdepth}{2}}
\begingroup
\renewcommand{\contentsname}{Appendix Contents}
\tableofcontents
\endgroup
\clearpage
\section{Notation} 
\label{app:notation}
We summarize the notation used throughout the paper.

\begin{table}[h]
\centering
\caption{Summary of Notation.}
\vspace{2mm}
\label{tab:notation}
\small
\begin{tabular}{@{}lp{0.76\linewidth}@{}}
\toprule
\textbf{Symbol} & \textbf{Description} \\
\midrule
    $c$ & Text prompt \\
    $K$ & Number of candidates in a batch \\
    $k$ & Candidate index \\
    $T$ & Number of sampling steps \\
    $s$ & Discrete denoising-step index, $s=1,\ldots,T$ \\
    $t$ & Model time, decreasing from $1$ (noise) to $0$ (image) \\
    $x_t^k$ & Latent of candidate $k$ at time $t$ \\
    $x_0^{1:K}$ & Batch of $K$ generated images \\
    $\hat x_0^k$ & Model's current estimate of candidate $k$'s final image \\
    $p_\theta(x_0\mid c)$ & Output distribution of the frozen base text-to-image model \\
    $v_\theta(x_t,t,c)$ & Velocity field of a flow-based text-to-image model \\
\midrule
    $r(x_0,c)$ & Learned reward for image $x_0$ and prompt $c$ \\
    $r_k$ & Reward of candidate $k$ \\
    $\rho$ & Reward floor used to define satisfaction \\
    $\tau$ & Batch-relative reward cutoff, $\tau=\max_k r_k-\Delta$ \\
    $\Delta$ & Tolerance controlling the gap between the largest reward and $\tau$ \\
    $q_k$ & Smooth gate indicating whether candidate $k$ exceeds $\tau$ \\
    $d_{ij}$ & Cosine distance between the features of candidates $i$ and $j$ \\
    $D$ & Reward-gated mean pairwise distance; in the hard limit, this is the unweighted mean on admissible batches \\
    $\delta$ & Diversity cutoff \\
    $\beta_r$ & Reward temperature \\
    $\beta_d$ & Diversity temperature \\
    $\mathcal J_r$ & Reward penalty \\
    $\mathcal J_d$ & Diversity penalty \\
    $\mathcal J$ & Sum of the reward and diversity penalties \\
\midrule
    $\mathcal G$ & Sampling-step indices at which candidates are scored and updated using $\mathcal J$ \\
    $r_k^{\mathrm{prev}}$ & Reward of candidate $k$ at the previous update step \\
    $m$ & Number of update steps remaining \\
    $\tilde r_k$ & Extrapolated reward of candidate $k$ \\
\midrule
    $p_\lambda$ & Reward-tilted output distribution with tilt strength $\lambda$ \\
    $\pi^\star$ & Hard satisficing target distribution over batches \\
    $A_\rho$ & Images clearing reward floor $\rho$ \\
    $p_\rho$ & Base output distribution restricted to $A_\rho$ \\
\bottomrule
\end{tabular}
\end{table}

\newpage
\begingroup
\allowdisplaybreaks[1]
\setlength{\abovedisplayskip}{5pt plus2pt minus1pt}
\setlength{\belowdisplayskip}{5pt plus2pt minus1pt}
\setlength{\abovedisplayshortskip}{3pt plus1pt}
\setlength{\belowdisplayshortskip}{4pt plus1pt}
\newcommand{\proofstep}[1]{\par\addvspace{0.6\baselineskip}\Needspace{3\baselineskip}\noindent\textbf{#1}\enspace\ignorespaces}
\section{Theoretical Results and Proofs}
\label{app:theory}

This section first analyzes reward tilting, then characterizes the reward-floor frontier, and finally proves the zero-temperature limit of \method's objective.

Fix a prompt $c$. Let $p=p_\theta(\cdot\mid c)$ denote the base-model distribution, and write $R(x)=r(x,c)$ for the reward. Rewards are measurable and real-valued. All image distributions below have densities with respect to the same reference measure, denoted by $dx$ in integrals. We use the same symbol for a distribution and its density. For a measurable set $E$, its probability is $p(E)=\int_Ep(x)dx$, and $E^c$ denotes its complement. An indicator $\mathbf1_E(x)$ is one on $E$ and zero elsewhere.

We assume that the batch size $K\ge2$ is a fixed finite integer and write $x^{1:K}=(x^1,\ldots,x^K)$. Candidate indices satisfy $k,\ell\in\{1,\ldots,K\}$, and pair indices satisfy $1\le i<j\le K$. For a real reward floor $\rho$, let $A_\rho=\{x:R(x)\ge\rho\}$ be the set of images satisfying the floor. Its base-model probability is $Z_\rho=p(A_\rho)$. A floor is feasible when $Z_\rho>0$. For such a floor, the base distribution restricted to $A_\rho$ is
\begin{equation}
\label{eq:app-floor-density}
  p_\rho(x)=\frac{p(x)\mathbf 1[x\in A_\rho]}{Z_\rho}.
\end{equation}

\subsection{Reward Tilting}

\paragraph{Precise conditions for Proposition~\ref{prop:tilt}.}
At a given resampling step, fix $K$ candidates $x^{1:K}$ with rewards $r_k=R(x^k)$, and let $j^\star$ be the index of the unique highest-reward candidate. Let $\lambda\ge0$ be the reward-tilt strength and $a_k>0$ a finite factor in candidate $k$'s resampling mass that does not depend on $\lambda$; setting every $a_k=1$ recovers the pure exponential reward weighting in Eq.~\ref{eq:tilt}. Define the normalized resampling mass of candidate $k$ as
\[
  \omega_k(\lambda)=\frac{a_k e^{\lambda r_k}}{\sum_{\ell=1}^K a_\ell e^{\lambda r_\ell}}.
\]
Let $N_k\in\{0,\ldots,K\}$ be the number of copies of candidate $k$ after resampling. We assume that resampling returns $K$ candidates from this fixed pool, so $\sum_kN_k=K$, and that $\mathbb E[N_k]=K\omega_k(\lambda)$ for every $k$. We do not assume that the selections are independent.

\begin{proof}[Proof of Proposition~\ref{prop:tilt}]
We first show that the highest-reward candidate receives resampling mass tending to one. We then bound the probability that any other candidate survives among the $K$ outputs. All probabilities and expectations concern only the randomness of resampling from the fixed pool above.

\proofstep{Step 1: The highest-reward candidate's mass tends to one.} For any $\ell\ne j^\star$, the shared normalization cancels:
\begin{equation}
\label{eq:app-resampling-mass-ratio}
  \frac{\omega_\ell(\lambda)}{\omega_{j^\star}(\lambda)}
  =\frac{a_\ell e^{\lambda r_\ell}}{a_{j^\star}e^{\lambda r_{j^\star}}}
  =\frac{a_\ell}{a_{j^\star}}e^{-\lambda(r_{j^\star}-r_\ell)}
  \longrightarrow 0.
\end{equation}
The limit holds because $r_{j^\star}-r_\ell>0$. Dividing the numerator and denominator of $\omega_{j^\star}$ by $a_{j^\star}e^{\lambda r_{j^\star}}$ gives
\[
\omega_{j^\star}(\lambda)=\left(1+\sum_{\ell\ne j^\star}\frac{a_\ell}{a_{j^\star}}e^{-\lambda(r_{j^\star}-r_\ell)}\right)^{-1}\longrightarrow1.
\]
Each summand decreases to zero, so the denominator decreases to one. Its reciprocal and the expected copy count $K\omega_{j^\star}(\lambda)$ therefore increase.

\proofstep{Step 2: All $K$ outputs are copies with probability tending to one.} Because resampling returns exactly $K$ candidates, $N_{j^\star}<K$ if and only if some other candidate survives. Also, $\mathbf1[N_\ell\ge1]\le N_\ell$, so taking expectations gives $\Pr[N_\ell\ge1]\le\mathbb E[N_\ell]$. The union bound and the expected copy counts therefore give
\begin{align*}
\Pr[N_{j^\star}<K]
&=\Pr\!\left[\bigcup_{\ell\ne j^\star}\{N_\ell\ge1\}\right]
\le\sum_{\ell\ne j^\star}\Pr[N_\ell\ge1]\\
&\le\sum_{\ell\ne j^\star}\mathbb E[N_\ell]
=K\sum_{\ell\ne j^\star}\omega_\ell
=K(1-\omega_{j^\star})\longrightarrow0.
\end{align*}
The last equality uses $\sum_k\omega_k=1$, and the limit uses $\omega_{j^\star}\to1$. Taking the complementary event gives $\Pr[N_{j^\star}=K]=1-\Pr[N_{j^\star}<K]\to1$.

Multinomial, Srinivasan Sampling Process (SSP), and systematic resampling satisfy the expectation condition. The conclusion also holds if VASR-MAX's redirection rule is applied after this systematic resampling: when resampling returns $K$ copies of the highest-reward candidate, the redirection rule does not change the result.
\end{proof}

For the population-level comparison, assume $R$ is bounded above and let $Z(\lambda)=\mathbb E_p[e^{\lambda R}]$. The normalized tilt in Eq.~\eqref{eq:tilt} is $p_\lambda(x)=p(x)e^{\lambda R(x)}/Z(\lambda)$.

\begin{corollary}[A finite reward tilt does not enforce the reward floor]
\label{cor:tiltdom}
Fix $\rho$ such that $0<p(A_\rho)<1$. Then $p_\lambda(A_\rho)<1$ for every finite $\lambda\ge0$.
\end{corollary}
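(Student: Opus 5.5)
The plan is to show that the tilt leaves positive mass on the complement $A_\rho^c$. The sampler is $p_\lambda(x)=p(x)e^{\lambda R(x)}/Z(\lambda)$, and we use that $R$ is bounded above, say $R\le B$.

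First, check that $p_\lambda$ is a well-defined probability density for every finite $\lambda\ge0$. Since $R\le B$, we have $0<e^{\lambda R}\le e^{\lambda B}$. Hence $Z(\lambda)=\mathbb E_p[e^{\lambda R}]\le e^{\lambda B}<\infty$. Also $Z(\lambda)>0$, because $e^{\lambda R}>0$ everywhere and $p$ is a probability density. So the normalization is finite and strictly positive.

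Next, compute the mass outside the floor directly:
\[
  p_\lambda(A_\rho^c)=\frac{1}{Z(\lambda)}\int_{A_\rho^c}p(x)\,e^{\lambda R(x)}\,dx .
\]
The hypothesis $p(A_\rho)<1$ gives $p(A_\rho^c)>0$. The integrand $p(x)e^{\lambda R(x)}$ is strictly positive wherever $p(x)>0$, since the exponential factor never vanishes. A nonnegative function that is strictly positive on a set of positive $p$-measure has a strictly positive integral over that set. Concretely, write $A_\rho^c=\bigcup_n\{x\in A_\rho^c: e^{\lambda R(x)}\ge 1/n\}$. Continuity of measure yields some $n$ for which this piece has positive $p$-probability. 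The integral is then at least $p(\text{piece})/n>0$.

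Therefore $p_\lambda(A_\rho^c)>0$, which is equivalent to $p_\lambda(A_\rho)=1-p_\lambda(A_\rho^c)<1$.

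The lower bound $p(A_\rho)>0$ is not needed for the inequality itself. It only ensures that $p_\rho$ is defined, so that the comparison with $p_\rho$ made in the main text is meaningful. There is no real obstacle in this argument. The one step needing care is the strict positivity of the integral over $A_\rho^c$. That step rests on $e^{\lambda R}>0$ pointwise, which is exactly where finiteness of $\lambda$ enters. In the limit $\lambda\to\infty$, the ratio $p_\lambda(A_\rho^c)/p_\lambda(A_\rho)$ can vanish, but this never happens at any fixed finite $\lambda$.
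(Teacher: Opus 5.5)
Your proof is correct and follows the paper's argument: you use the upper bound on $R$ to get $0<Z(\lambda)\le e^{\lambda B}$, then show $p_\lambda(A_\rho^c)=Z(\lambda)^{-1}\int_{A_\rho^c}e^{\lambda R}p\,dx>0$ from $p(A_\rho^c)>0$ and $e^{\lambda R}>0$. Your continuity-of-measure step just spells out the strict positivity that the paper asserts directly, and you are right that $p(A_\rho)>0$ is not needed for the inequality itself.
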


\begin{proof}
Let $B$ be a finite upper bound for $R$. Since $0<e^{\lambda R}\le e^{\lambda B}$ and $\int p(x)dx=1$, taking expectations gives $0<Z(\lambda)\le e^{\lambda B}<\infty$. The tilt therefore preserves positive probability below the floor:
\[
p_\lambda(A_\rho^c)=\frac{\int_{A_\rho^c}e^{\lambda R(x)}p(x)dx}{Z(\lambda)}>0,
\qquad p_\lambda(A_\rho)=1-p_\lambda(A_\rho^c)<1.
\]
\end{proof}

\subsection{Reward-Floor Frontier}

In this subsection, support means the set on which a density is positive, with sets of zero base-model probability ignored; $\operatorname{supp}(\nu)$ denotes the support of a distribution $\nu$. A restriction is proper when it removes a set with positive base-model probability. The theorem's diversity assumption compares distributions by inclusion of these supports, including distributions that assign different probabilities to the same support. In particular, equal supports have equal diversity: each support is contained in the other, so both diversity inequalities hold. This distribution-level diversity is distinct from the per-batch statistic $D$ used in the objective.

We use $\mathrm{KL}(\nu\Vert p)=\int\nu(x)\log(\nu(x)/p(x))dx$, with zero-density contributions taken as zero. This divergence is $+\infty$ if $\nu$ assigns positive probability to any set of zero $p$ probability. Thus finite divergence requires $\nu$ to be supported within $p$. All support comparisons and equalities between distributions below ignore sets of zero base-model probability.

The factorization in Eq.~\eqref{eq:factorize} follows by substituting Eq.~\eqref{eq:app-floor-density} for each candidate:
\[
\prod_{k=1}^K\left(p(x^k)\mathbf1[x^k\in A_\rho]\right)=\prod_{k=1}^K\left(Z_\rho p_\rho(x^k)\right)=Z_\rho^K\prod_{k=1}^K p_\rho(x^k).
\]
Multiplying both sides by the batch diversity indicator in Eq.~\eqref{eq:constraint} preserves this equality. Since $Z_\rho^K>0$ is constant with respect to $x^{1:K}$, it is absorbed into the batch normalizer. The diversity indicator remains in the target, exactly as in Eq.~\eqref{eq:factorize}.

The following lemma records a standard conditioning property of KL divergence; see \citet{csiszar2004information} for background on information projections.

\begin{lemma}[Among distributions over $A_\rho$, $p_\rho$ is uniquely closest to the base model]
\label{lem:iproj}
For a feasible floor $\rho$, the distribution $p_\rho$ uniquely minimizes $\mathrm{KL}(\cdot\Vert p)$ among distributions that assign probability one to $A_\rho$. Moreover, every such distribution $\nu$ with finite $\mathrm{KL}(\nu\Vert p)$ has support contained in $\operatorname{supp}(p_\rho)$.
\end{lemma}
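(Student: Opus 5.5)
The plan is to prove the lemma with the chain-rule (conditioning) decomposition of KL divergence. Fix a feasible floor $\rho$, so $Z_\rho=p(A_\rho)>0$, and let $\nu$ be any distribution with $\nu(A_\rho)=1$. If $\mathrm{KL}(\nu\Vert p)=+\infty$, then $\nu$ cannot beat $p_\rho$, whose divergence is finite (computed below). So I will assume $\mathrm{KL}(\nu\Vert p)<\infty$. By the convention recorded above, $\nu$ is then absolutely continuous with respect to $p$.

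\textbf{Support claim.} This part is immediate. Since $\nu$ assigns probability one to $A_\rho$, its density vanishes off $A_\rho$, up to $p$-null sets. Absolute continuity makes it vanish where $p=0$. Hence $\nu$ is positive only on $A_\rho\cap\{p>0\}$, which is $\operatorname{supp}(p_\rho)$ by Eq.~\eqref{eq:app-floor-density}.

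\textbf{Decomposition.} On $A_\rho$ we have $p(x)=Z_\rho\,p_\rho(x)$, and $\nu$ lives on $A_\rho$. Writing $\log(\nu/p)=\log(\nu/p_\rho)-\log Z_\rho$ and integrating against $\nu$ should give
\[
\mathrm{KL}(\nu\Vert p)=\mathrm{KL}(\nu\Vert p_\rho)+\log\frac{1}{Z_\rho}.
\]
Taking $\nu=p_\rho$ gives $\mathrm{KL}(p_\rho\Vert p)=\log(1/Z_\rho)<\infty$. Gibbs' inequality says $\mathrm{KL}(\nu\Vert p_\rho)\ge0$, with equality iff $\nu=p_\rho$ almost everywhere. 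This yields both minimality and uniqueness, up to the base-null sets the paper already ignores.

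\textbf{Main obstacle.} The only delicate point is justifying the split of the integral. The term $-\log Z_\rho$ is a finite constant, so splitting is legitimate whenever one side is finite. I need to check that $\mathrm{KL}(\nu\Vert p_\rho)$ is well defined without assuming it finite in advance. The integrand $\nu\log(\nu/p_\rho)$ has negative part bounded by $p_\rho/e$, which is integrable, so the integral exists in $[0,\infty]$. The identity therefore holds in the extended reals. In particular it also covers the case $\mathrm{KL}(\nu\Vert p)=\infty$, so the reduction above is not strictly needed. One further check: $\nu\ll p$ together with $\nu(A_\rho)=1$ gives $\nu\ll p_\rho$, so no infinite contributions arise from where $p_\rho=0$.
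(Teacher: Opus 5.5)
Your proposal is correct and follows the paper's proof essentially step for step: the paper also sets aside infinite-KL competitors, obtains support containment from finite KL together with $\nu(A_\rho)=1$, and then uses $\mathrm{KL}(\nu\Vert p)=\mathrm{KL}(\nu\Vert p_\rho)-\log Z_\rho$ with nonnegativity of KL (equality only at $\nu=p_\rho$) to get the unique minimum. Your additional check that this split holds in the extended reals, via the $p_\rho/e$ bound on the negative part, is a small rigor point that the paper leaves implicit.
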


\begin{proof}
We compute the KL divergence of $p_\rho$, identify where a competing distribution can assign probability, and then compare their KL divergences directly.

\proofstep{Step 1: Compute the reference KL and establish support containment.} Wherever $p_\rho$ is positive, $p_\rho/p=1/Z_\rho$. Substituting this ratio and using $\int_{A_\rho}p_\rho(x)dx=1$ gives
\begin{equation}
\mathrm{KL}(p_\rho\Vert p)=\int_{A_\rho}p_\rho(x)\log(1/Z_\rho)dx=-\log Z_\rho\int_{A_\rho}p_\rho(x)dx=-\log Z_\rho<\infty.
\label{eq:app-iproj-reference}
\end{equation}
An infinite-KL distribution cannot improve this finite value. Consider any $\nu$ with finite KL to $p$ and $\nu(A_\rho)=1$. Finite KL requires its support to lie within that of $p$. The reward constraint also confines it to $A_\rho$. Its support therefore lies within their intersection, which is exactly the support of $p_\rho$.

\proofstep{Step 2: Every competitor has at least as much KL divergence.} The support containment makes the ratios below defined almost everywhere under $\nu$. We insert $p_\rho/p=1/Z_\rho$ into the density ratio and split the logarithm. The constant $\log Z_\rho$ can then be taken outside the integral. Finally, $\int_{A_\rho}\nu(x)dx=1$ gives
\begin{align}
  \mathrm{KL}(\nu\Vert p)
  &=\int_{A_\rho}\nu(x)\log\frac{\nu(x)}{p(x)}dx \notag\\
  &=\int_{A_\rho}\nu(x)\log\left(\frac{\nu(x)}{p_\rho(x)}\frac{1}{Z_\rho}\right)dx \notag\\
  &=\int_{A_\rho}\nu(x)\log\frac{\nu(x)}{p_\rho(x)}dx-\log Z_\rho\int_{A_\rho}\nu(x)dx \notag\\
  &=\mathrm{KL}(\nu\Vert p_\rho)-\log Z_\rho.\label{eq:app-iproj-final}
\end{align}
KL divergence is nonnegative, with equality only for identical distributions. Thus $\mathrm{KL}(\nu\Vert p)\ge-\log Z_\rho=\mathrm{KL}(p_\rho\Vert p)$, with equality only for $\nu=p_\rho$. This proves the unique minimum; support containment was established above.
\end{proof}

\begin{proof}[Proof of Theorem~\ref{thm:frontier}]
The proof has four steps. First, $p_\rho$ retains maximal diversity at each floor. Second, we identify its highest guaranteed reward, which need not equal the specified floor $\rho$. Third, we show that improving this guarantee requires losing diversity. Finally, we show that the family represents every frontier point in the stated comparison class.

\proofstep{Step 1: $p_\rho$ has maximal diversity at each floor.} By Eq.~\eqref{eq:app-floor-density}, $p_\rho$ assigns probability one to $A_\rho$ and is positive exactly where $p$ is positive within $A_\rho$. Let $\rho'\ge\rho$ be another feasible floor. Since $R(x)\ge\rho'$ implies $R(x)\ge\rho$, we have $A_{\rho'}\subseteq A_\rho$. Intersecting these sets with the support of $p$ gives
\begin{equation}
\label{eq:app-frontier-support}
  \operatorname{supp}(p_{\rho'})=\operatorname{supp}(p)\cap A_{\rho'}
  \subseteq\operatorname{supp}(p)\cap A_\rho=\operatorname{supp}(p_\rho).
\end{equation}
By assumption, restricting support cannot increase diversity, and a proper restriction strictly decreases it. Equation~\eqref{eq:app-frontier-support} therefore implies that $p_{\rho'}$ is no more diverse than $p_\rho$. The inequality is strict when $p(A_\rho\setminus A_{\rho'})>0$. Otherwise, the two conditioning sets differ only on a set of zero base-model probability, so $p_{\rho'}=p_\rho$ almost everywhere.

Now consider any distribution that assigns probability one to $A_\rho$ and has finite divergence to $p$. Lemma~\ref{lem:iproj} places its support within $\operatorname{supp}(p_\rho)$. Its diversity therefore cannot exceed the diversity of $p_\rho$. Thus, $p_\rho$ has maximal diversity at each floor.

\proofstep{Step 2: Find the highest guaranteed reward.} To compare satisfaction, we need the highest floor actually guaranteed by $p_\rho$. This can exceed the specified $\rho$ when there is a gap in the reward values. Write $g=\sup\{a\in\mathbb R:p_\rho(R\ge a)=1\}$ for the supremum of the guaranteed floors. Since $\rho$ itself is guaranteed, $g\ge\rho$. We now check that $g$ is finite and is itself guaranteed, so it can be used in the Pareto comparison.

First, $g$ is finite. If it were infinite, every event $\{R\ge n\}$, for positive integers $n$, would have probability one. These events decrease to the empty set because rewards are real-valued. Continuity of probability would then give probability one to the empty set, a contradiction.

Next, the floor $g$ is itself guaranteed. For each positive integer $n$, the supremum definition supplies a guaranteed floor greater than $g-1/n$. This implies $p_\rho(R\ge g-1/n)=1$. These events decrease to $\{R\ge g\}$, so continuity of probability gives
\[
p_\rho(R\ge g)=\lim_{n\to\infty}p_\rho(R\ge g-1/n)=1.
\]
Thus $g$ is the highest floor attained with probability one, not just a supremum of smaller guaranteed floors. The same decreasing-event argument applies to any distribution with finite guaranteed reward.

\proofstep{Step 3: No competitor dominates $p_\rho$.} A distribution dominates another if neither satisfaction nor diversity is lower and at least one is higher. We rule out both ways a competitor could improve on $p_\rho$.

\emph{Greater diversity without lower satisfaction.} Consider a distribution $\nu$ with $\mathrm{KL}(\nu\Vert p)<\infty$ that guarantees reward at least $g$. Since $g\ge\rho$, it satisfies $\nu(A_\rho)=1$. Lemma~\ref{lem:iproj} therefore places its support within that of $p_\rho$, so $\nu$ cannot have greater diversity.

\emph{Higher satisfaction without lower diversity.} Suppose $\nu$ guarantees a strictly higher floor $h>g$. Then $p_\rho(R<h)>0$: otherwise $h$ would also be guaranteed under $p_\rho$, contradicting the definition of $g$. In contrast, $\nu(R<h)=0$. Since $p_\rho$ assigns probability one to $A_\rho$, substituting its density from Eq.~\eqref{eq:app-floor-density} gives
\[
p(A_\rho\setminus A_h)=Z_\rho p_\rho(R<h)>0,
\qquad \nu(A_\rho\setminus A_h)\le\nu(R<h)=0.
\]
The competitor therefore excludes a set of positive base-model probability from the support of $p_\rho$. By strict support monotonicity, $\nu$ has lower diversity. No competitor can improve either guaranteed reward or diversity without reducing the other, so $p_\rho$ is Pareto-optimal. Floors producing the same distribution give the same frontier point.

\proofstep{Step 4: Every frontier point is represented by the family.} Step 3 establishes that each $p_\rho$ is Pareto-optimal. We must also show that no undominated satisfaction--diversity pair is missing from this family.

Take any $\nu$ with finite divergence to $p$ and finite guaranteed reward $h=\sup\{a\in\mathbb R:\nu(R\ge a)=1\}$. The decreasing-event argument in Step 2 gives $\nu(A_h)=1$. We must have $Z_h=p(A_h)>0$: otherwise $\nu$ would assign probability one to a set of zero $p$ probability, contradicting finite KL. Thus $p_h$ exists. Lemma~\ref{lem:iproj} gives $\operatorname{supp}(\nu)\subseteq\operatorname{supp}(p_h)$, so $p_h$ has at least as much diversity as $\nu$.

We also need to check that $p_h$ has the same satisfaction as $\nu$. It guarantees reward at least $h$ by construction. Suppose it guaranteed some floor $b>h$. Its conditional density would give $p(A_h\setminus A_b)=Z_hp_h(R<b)=0$. Finite KL would then imply $\nu(A_h\setminus A_b)=0$. Together with $\nu(A_h)=1$, this gives $\nu(A_b)=1$, contradicting the definition of $h$. Therefore, the highest guaranteed reward of $p_h$ is exactly $h$.

We have shown that $p_h$ has the same satisfaction as $\nu$ and at least as much diversity. A proper support inclusion gives strictly greater diversity, so such a $\nu$ is dominated by $p_h$. Equal supports give the same diversity and hence the same satisfaction--diversity pair. Every undominated pair of finite satisfaction and diversity is therefore represented by the family, although more than one distribution can represent the same pair.

\end{proof}

\subsection{Zero-Temperature Limit of \method's Objective}

\begin{proof}[Proof of Proposition~\ref{prop:hard-limit}]
We first show that the objective's weight approaches the indicator of a feasible batch. We then show that its normalizing constant converges to the probability of that event. Finally, we show that the normalized distribution converges to $\pi^\star$.

\textbf{Setup.} Fix a current candidate batch and let $M$ be its maximum reward. Fix the tolerance $\Delta>0$ and set $\tau=\rho=M-\Delta$. Also fix the diversity cutoff $\delta\in\mathbb R$ for this objective evaluation. These values remain fixed as we integrate over batches and take temperature limits. In particular, $M$ is not recomputed for each integration variable.

For a generic batch $x^{1:K}$, write $r_k=R(x^k)$ for candidate $k$'s reward. The density of a batch of independent base-model samples is $P_c(x^{1:K})=\prod_{k=1}^Kp(x^k)$. Integrals over batches use the product reference measure $dx^1\cdots dx^K$.

Let $d_{ij}=d_{ij}(x^{1:K})$ be the measurable cosine distances between candidates' features. We hold the feature map fixed and assume these distances are well defined. They lie in $[0,2]$ and do not depend on the temperatures. There are $\binom K2=K(K-1)/2$ candidate pairs. For a reward temperature $\beta_r>0$, write the reward gate and weighted diversity as
\[
q_k=\frac{1}{1+e^{(\rho-r_k)/\beta_r}},\qquad D_{\beta_r}=\frac{\sum_{i<j}q_iq_jd_{ij}}{\sum_{i<j}q_iq_j}.
\]
Here $D_{\beta_r}$ is the $D$ in Eq.~\eqref{eq:divpen}. The subscript makes its dependence on the reward temperature explicit. The hard target instead uses the unweighted mean $D_0=\binom K2^{-1}\sum_{i<j}d_{ij}$. We will show below that $D_{\beta_r}\to D_0$ whenever all rewards exceed the floor.

Define the feasible batch set $C=\{x^{1:K}:r_k\ge\rho\ \text{for every }k,\ D_0\ge\delta\}$. The proposition assumes $P_c(C)>0$, so conditioning on this set is possible. It also assumes $P_c(r_k=\rho)=0$ for every $k$ and $P_c(D_0=\delta)=0$. Since there are finitely many candidates, the union of these boundary events also has probability zero. We may therefore establish the limit outside these events.

For temperatures $\beta_r,\beta_d>0$, define the weight $w_{\beta_r,\beta_d}=\exp(-\mathcal J)$ using Eq.~\eqref{eq:obj}. Its normalizing constant is $Z_{\beta_r,\beta_d}=\mathbb E_{P_c}[w_{\beta_r,\beta_d}]$.

These quantities are well defined at every positive temperature. Every $q_k$ lies in $(0,1)$, and $K\ge2$ ensures that there is at least one pair. Thus the denominator of $D_{\beta_r}$ is positive, and its weighted average is finite. Both penalties are consequently finite and nonnegative, giving $0<w_{\beta_r,\beta_d}\le1$. Taking expectations gives $0<Z_{\beta_r,\beta_d}\le1$. The induced distribution is therefore
\begin{equation}
\label{eq:app-induced-distribution}
  \widetilde\pi_{\beta_r,\beta_d}(x^{1:K})
  =\frac{P_c(x^{1:K})w_{\beta_r,\beta_d}(x^{1:K})}{Z_{\beta_r,\beta_d}}.
\end{equation}
\proofstep{Step 1: The weights converge to the feasible-batch indicator.} Equation~\eqref{eq:rewpen} gives $\mathcal J_r=-\sum_k\log q_k$. We substitute this and $\operatorname{softplus}(z)=\log(1+e^z)$ into $\exp(-\mathcal J_r-\mathcal J_d)$. Exponentiating the logarithms gives the quotient below. Rewriting its denominator using $\operatorname{sigmoid}(z)=1/(1+e^{-z})$ gives a product of reward gates and a diversity gate:
\begin{equation}
\label{eq:app-objective-weight}
\begin{aligned}
  w_{\beta_r,\beta_d}
  &=e^{\sum_k\log q_k}\,e^{-\log(1+e^{(\delta-D_{\beta_r})/\beta_d})} \\
  &=\frac{\prod_{k=1}^K q_k}{1+e^{(\delta-D_{\beta_r})/\beta_d}}
  =\left(\prod_{k=1}^K q_k\right)
  \operatorname{sigmoid}\left(\frac{D_{\beta_r}-\delta}{\beta_d}\right).
\end{aligned}
\end{equation}
Take $\beta_r,\beta_d\to0$ through positive values. Outside the boundary events, there are two reward cases.

\emph{Case 1: At least one candidate is below the reward floor.} If $r_k<\rho$, then $e^{(\rho-r_k)/\beta_r}\to\infty$, so $q_k\to0$. All factors in Eq.~\eqref{eq:app-objective-weight} lie between zero and one. Hence $0\le w_{\beta_r,\beta_d}\le q_k\to0$: one failed reward requirement makes the batch weight vanish, regardless of diversity.

\emph{Case 2: Every candidate is above the reward floor.} Now every $r_k>\rho$, so $e^{(\rho-r_k)/\beta_r}\to0$ and every $q_k\to1$. Each pair weight $q_iq_j$ then tends to one. Taking limits in the finite sums, whose denominator tends to $\binom K2>0$, gives
\begin{equation}
\label{eq:app-diversity-limit}
  \lim_{\beta_r\to0}D_{\beta_r}
  =\frac{\sum_{i<j}1\cdot1\cdot d_{ij}}{\sum_{i<j}1\cdot1}
  =\frac{\sum_{i<j}d_{ij}}{\binom K2}=D_0.
\end{equation}
We must now check the diversity gate, whose numerator and temperature both change. Since $D_0\ne\delta$, the limiting diversity has a nonzero gap from the cutoff. Convergence to $D_0$ ensures that, for sufficiently small $\beta_r$, $|D_{\beta_r}-D_0|<|D_0-\delta|/2$. Thus $D_{\beta_r}$ stays on the same side of $\delta$ as $D_0$. The triangle inequality also bounds its distance from the cutoff:
\[
|D_{\beta_r}-\delta|\ge |D_0-\delta|-|D_{\beta_r}-D_0|>\frac{|D_0-\delta|}{2}>0.
\]
The positive lower bound does not depend on either temperature. Dividing by $\beta_d\to0$ therefore makes the magnitude of the sigmoid argument diverge. When $D_0>\delta$, its sign is positive and the diversity gate tends to one. When $D_0<\delta$, its sign is negative and the gate tends to zero. This holds regardless of the relative temperature rates.

Combining these diversity cases with the reward cases gives
\begin{equation}
\label{eq:app-weight-limit}
  \lim_{\beta_r,\beta_d\to0}w_{\beta_r,\beta_d}
  =\left(\prod_k\mathbf1[r_k\ge\rho]\right)\mathbf1[D_0\ge\delta]=\mathbf1_C
  \quad\text{almost everywhere under }P_c.
\end{equation}
\proofstep{Step 2: The normalizer converges to the feasible-batch probability.} To normalize the limiting weights, we also need the limit of their expectation. Since $0\le w_{\beta_r,\beta_d}\le1$ and $P_c$ is a probability distribution, the constant one is an integrable bound. Dominated convergence therefore allows us to move the limit inside the expectation in Eq.~\eqref{eq:app-weight-limit}:
\begin{equation}
\label{eq:app-normalizer-limit}
\lim_{\beta_r,\beta_d\to0}Z_{\beta_r,\beta_d}
=\lim_{\beta_r,\beta_d\to0}\mathbb E_{P_c}[w_{\beta_r,\beta_d}]
=\mathbb E_{P_c}[\mathbf1_C]=P_c(C)>0.
\end{equation}
\proofstep{Step 3: The normalized distribution converges to $\pi^\star$.} The target in Eq.~\eqref{eq:constraint} is the base-batch distribution conditioned on $C$:
\begin{equation}
\label{eq:app-target-density}
\pi^\star(x^{1:K})=\frac{P_c(x^{1:K})\mathbf1_C(x^{1:K})}{P_c(C)}.
\end{equation}
Equations~\eqref{eq:app-weight-limit} and~\eqref{eq:app-normalizer-limit} give $w_{\beta_r,\beta_d}/Z_{\beta_r,\beta_d}\to\mathbf1_C/P_c(C)$ almost everywhere under $P_c$. Division is valid because the limiting normalizer $P_c(C)$ is positive.

To conclude convergence in total variation, we need to integrate the absolute difference of these ratios. For sufficiently small temperatures, convergence of the normalizer ensures $Z_{\beta_r,\beta_d}\ge P_c(C)/2$. The triangle inequality and $0\le w_{\beta_r,\beta_d},\mathbf1_C\le1$ then give
\[
\left|\frac{w_{\beta_r,\beta_d}}{Z_{\beta_r,\beta_d}}-\frac{\mathbf1_C}{P_c(C)}\right|
\le\frac{w_{\beta_r,\beta_d}}{Z_{\beta_r,\beta_d}}+\frac{\mathbf1_C}{P_c(C)}
\le\frac{2}{P_c(C)}+\frac{1}{P_c(C)}=\frac{3}{P_c(C)}.
\]
This finite constant is integrable under the probability distribution $P_c$, so dominated convergence applies again. Total variation is half the integral of the absolute density difference. We substitute the densities from Eqs.~\eqref{eq:app-induced-distribution} and~\eqref{eq:app-target-density} into this integral. Factoring out the common density $P_c$ turns the integral into an expectation:
\begin{align}
2\left\|\widetilde\pi_{\beta_r,\beta_d}-\pi^\star\right\|_{\mathrm{TV}}
&=\int\left|\widetilde\pi_{\beta_r,\beta_d}(x^{1:K})-\pi^\star(x^{1:K})\right|dx^1\cdots dx^K\notag\\
&=\mathbb E_{P_c}\left[\left|\frac{w_{\beta_r,\beta_d}}{Z_{\beta_r,\beta_d}}-\frac{\mathbf1_C}{P_c(C)}\right|\right]\longrightarrow0.
\label{eq:app-total-variation}
\end{align}
The integrand tends to zero by the ratio limit above. Dividing by two proves convergence to $\pi^\star$ in total variation as $\beta_r,\beta_d\to0$.
\end{proof}

\paragraph{Pareto optimality of the batch targets.}
The preceding proof connects the objective to $\pi^\star$. We now show that this batch target is Pareto-optimal under the support-based diversity assumption. The argument follows the same two comparisons as the single-image theorem: a competitor cannot increase diversity while preserving the reward guarantee, and a strictly higher guarantee requires lower diversity.

\textbf{Setup.} Fix $P_c$, $D_0$, and $\delta$ as above. Write $S(x^{1:K})=\min_k R(x^k)$ for worst-candidate reward. The feasible set is then $C=\{S\ge\rho,\ D_0\ge\delta\}$, and $\pi^\star=P_c\mathbf1_C/P_c(C)$, with $P_c(C)>0$.

Assume distribution-level diversity is nonincreasing under support restriction. Assume also that it strictly decreases when a set of positive $P_c$ probability is removed. This diversity compares distributions over batches; it is distinct from the per-batch statistic $D_0$.

\proofstep{Step 1: Apply the conditioning lemma on batch space.} The conditioning calculation in Lemma~\ref{lem:iproj} uses only the positive probability of the conditioning event. We can therefore substitute $P_c$, $C$, and $\pi^\star$ for $p$, $A_\rho$, and $p_\rho$. This gives $\mathrm{KL}(\pi^\star\Vert P_c)=-\log P_c(C)<\infty$. It also shows that every finite-KL distribution assigning probability one to $C$ has support contained in that of $\pi^\star$.

\proofstep{Step 2: Preserving the reward guarantee cannot increase diversity.} Let $g$ be the highest guaranteed value of $S$ under $\pi^\star$. Here $S$ is real-valued and is at least $\rho$ with probability one under $\pi^\star$. Step 2 of Theorem~\ref{thm:frontier}'s proof therefore applies: $g$ is finite, $g\ge\rho$, and $\pi^\star(S\ge g)=1$.

Consider a competitor $\nu$ with finite KL to $P_c$ that satisfies $D_0\ge\delta$ almost surely and guarantees reward at least $g$. Because $g\ge\rho$, this competitor also satisfies the reward requirement defining $C$. The union bound makes both requirements explicit:
\[
\nu(C^c)\le\nu(S<\rho)+\nu(D_0<\delta)=0+0=0,
\qquad \nu(C)=1.
\]
The lemma therefore places $\nu$'s support within that of $\pi^\star$, so $\nu$ cannot have greater diversity.

\proofstep{Step 3: A higher reward guarantee requires lower diversity.} If $\nu$ guarantees a strictly higher floor $h>g$, then $\pi^\star(S<h)>0$. Otherwise $h$ would also be guaranteed under $\pi^\star$, contradicting the definition of $g$. In contrast, $\nu(S<h)=0$. Substituting the target density gives
\[
P_c(C\cap\{S<h\})=P_c(C)\pi^\star(S<h)>0,
\qquad \nu(C\cap\{S<h\})=0.
\]
This is a proper support restriction, so $\nu$ has strictly lower diversity. Thus $\pi^\star$ is Pareto-optimal between guaranteed worst-candidate reward and support-based diversity within the stated class of batch distributions.

At a fixed objective evaluation, $\rho=M-\Delta$. Varying $\Delta>0$ therefore traverses this target family over the feasible floors below $M$.
\endgroup

\section{Experiment Settings}
\label{app:hyperparameters}

\paragraph{Sampling and update schedules.}
The main comparisons use $499$ Pick-a-Pic prompts and $3200$ HPDv2 prompts; the $200$ DrawBench prompts are evaluated separately in Appendix~\ref{app:experiments}. We compute $95\%$ percentile bootstrap confidence intervals by resampling prompts with replacement, keeping each prompt's four candidates together and holding $\rho$ fixed. The displayed $\pm$ values are the interval half-widths. For the reward gaps at matched DreamSim in Figure~\ref{fig:corner}, we linearly interpolate FK steering's worst-candidate reward between adjacent evaluated points, without extrapolating beyond its sampled DreamSim range.

We sample FLUX at $1024\times1024$ with 28 Euler steps and guidance $3.5$, SANA at $1024\times1024$ with 20 Euler steps and guidance $4.5$, and SD1.5 at $512\times512$ with 100 DDIM~\citep{song2021denoising} steps, $\eta=1$, and guidance $7.5$. For FLUX and SANA, every method uses the marginal-preserving SDE of \citet{mark2025feynman} with noise amplitude $a=0.3$. At steps where reward is evaluated, we estimate $\hat x_0$ by averaging the velocity predictions from the current and previous steps. Because the SDE correction is applied through a first-order Euler update, we replace SANA's shipped second-order solver with Euler for each method. For methods with scheduled scoring or resampling, the common steps are $4,8,\dots,24$ on FLUX and $3,6,\dots,18$ on SANA. On SD1.5, the baselines use their published configurations, and \method follows FK steering's schedule, updating latents at steps $20,40,60,80$ and $99$. We used an NVIDIA A100 GPU for the original experiments and NVIDIA H100 GPUs for runtime and memory profiling. Unless stated otherwise, each method returns four candidates per prompt using matched starting seeds.

\paragraph{Baseline settings.}
DAS uses Srinivasan Sampling Process (SSP) resampling, while FK steering uses multinomial resampling; both resample when the effective sample size is below $0.5K$. For DAS, the annealing rate $\gamma$ is determined by the number of sampling steps, and we set $\alpha$ using its recommended range for the ratio of the reward-guidance norm to the model-drift norm. FK steering applies $e^{\lambda r}$ to the unstandardized reward, so we select $\lambda$ according to the reward scale. We use the published NegToMe settings.

\paragraph{\method settings.}
We use a fixed step-size multiplier of $1$ for the latent updates. We set $\delta$ to the median of the $\binom{K}{2}$ pairwise distances at the first update step and set $\beta_d=\tfrac12\delta$. At each update, the diversity $D$ in Eq.~\ref{eq:divpen} is the mean of the current pairwise distances weighted by $q_iq_j$. Each $d_{ij}$ is the cosine distance between pooled features from FLUX joint block 12, SANA transformer block 7, or the SD1.5 UNet bottleneck. We apply \copyrule only during the second half of denoising ($s>T/2$) for every reward and base-model setting. We recompute $\tau$ at each update step but stop gradients through $\max_k r_k$; otherwise, increasing the current maximum reward would raise $\tau$ and increase the penalties on the other candidates, producing a gradient that reduces the maximum reward. The only \method setting swept in the main experiments is the tolerance $\Delta$. For each reward and base model, the selected $\Delta$ is approximately three-quarters of the median per-prompt difference between the largest and smallest candidate rewards. We also report sweeps over $\Delta$.

\paragraph{Numerical latent updates.}
For each candidate, we normalize the reward and diversity gradients separately. We weight the normalized reward gradient by $(1-q_k)/\max_j(1-q_j)$, then clip each gradient separately to a maximum norm of $0.1$ times the mean candidate latent norm at the first update. After clipping, we multiply the reward gradient by $0.5+1.5(s-1)/(T-1)$ and subtract the sum of the two gradients from the latent. We hold the pair weights $q_iq_j$ fixed when differentiating the diversity penalty.

\paragraph{Settings across rewards and base models.}
For each reward and base model, $\rho$ is the median reward of the base model. With HPSv3, FLUX uses $\rho=9.66$, $\Delta=1.5$, DAS $\alpha=0.1$, and FK $\lambda=1.5$. With ImageReward, FLUX uses $\rho=1.23$, $\Delta=0.5$, DAS $\alpha=0.005$, and FK $\lambda=10$; SANA uses $\rho=1.19$, $\Delta=0.5$, DAS $\alpha=0.005$, and FK $\lambda=10$; and SD1.5 uses $\rho=0.18$, $\Delta=0.75$, DAS $\alpha=0.001$, and FK $\lambda=10$. DAS uses $\gamma=0.024$ on FLUX, $\gamma=0.035$ on SANA, and $\gamma=0.008$ on SD1.5, as determined by their respective sampling-step counts.

The SANA and SD1.5 qualitative examples with ImageReward use the baseline parameters listed above and $\Delta=0.5$ for \method. Each method returns four candidates from matched starting seeds.

\subsection{2D Toy Setup} \label{ssec:app-toy} 
The base model is a rectified-flow velocity network trained on a mixture of $49$ Gaussians on a $7\times7$ grid with spacing $5$, using the same interpolant as FLUX, and is frozen for all runs. The reward is the maximum of $49$ Gaussian bumps, one per mode. The $16$ bumps on every other row and column have height $1$, and the other $33$ have heights drawn in $[0.15,0.45]$. We use the floor $\rho=0.5$, so only the 16 high-reward modes contain points that satisfy the floor. The centers of the 16 high-reward bumps are displaced from the grid by Gaussian noise with standard deviation $1.5$. For the displayed \method operating point, we use $\Delta=0.325$, $\beta_r=0.2$, $\beta_d=0.5$, diversity cutoff $\delta=20$, and latent-update multiplier $0.7$, with scoring and updates at steps $3,6,\ldots,24$ of the $28$ sampling steps. These toy-specific settings were selected from a hyperparameter sweep; $\beta_r$, $\beta_d$, and $\delta$ are fixed rather than computed from each batch. Here $\beta_d$ is independent of $\delta$, unlike the $\tfrac12\delta$ used for the image models. The plotted \method metrics average all $200$ runs with seeds $42$--$241$. The candidate illustration uses seed $88$, selected to show four candidates above the floor in distinct high-reward regions; it is not an average batch. The base model, reward landscape, and baseline configurations are unchanged.

\section{Additional Experiments}
\label{app:experiments}

Unless stated otherwise, the experiments below use Pick-a-Pic prompts, with $\rho$ set to the base model's median reward and the metrics defined in Section~\ref{sec:experiments}.

\subsection{Effect of Increasing the Number of Candidates}
We test whether \method remains more diverse than reward-steering baselines when the number of candidates increases from $K{=}4$ to $K{=}16$. As Table~\ref{tab:sd15-k16} shows with the SD1.5 base model and ImageReward, \method achieves a much higher DreamSim of $0.77$ than DAS ($0.04$) or VASR ($0.47$).

\begin{table}[ht]
\centering
\caption{\textbf{At $K{=}16$, \method maintains a competitive satisfaction rate and higher diversity than the reward-steering baselines.} We use SD1.5 with ImageReward, $\Delta{=}0.75$, and $\rho{=}0.16$, the median reward of the base model's $K{=}16$ outputs.}
\label{tab:sd15-k16}
\vspace{2mm}
\begin{tabular}{l ccc c cc}
\toprule
& \multicolumn{3}{c}{Satisfaction--diversity} & \multicolumn{1}{c}{Additional metric} & \multicolumn{2}{c}{Held-out} \\
\cmidrule(lr){2-4}\cmidrule(lr){5-5}\cmidrule(lr){6-7}
Method & min $\uparrow$ & DreamSim $\uparrow$ & $\mathrm{SAT}_\rho$ $\uparrow$ & LPIPS $\uparrow$ & HPSv3 & PickScore \\
\midrule
Base           & $-1.19$ & $0.89$ & $0.50$ & $0.76$ & $5.74$ & $20.49$ \\
NegToMe        & $-1.82$ & $1.01$ & $0.30$ & $0.83$ & $2.00$ & $19.42$ \\
\midrule
FK steering    & $0.69$ & $0.45$ & $0.88$ & $0.37$ & $7.25$ & $20.91$ \\
FK + NegToMe   & $-1.03$ & $0.64$ & $0.46$ & $0.53$ & $0.43$ & $19.26$ \\
VASR       & $0.74$ & $0.47$ & $0.88$ & $0.38$ & $7.55$ & $20.96$ \\
DAS            & $1.20$ & $0.04$ & $0.86$ & $0.01$ & $-3.48$ & $18.37$ \\
\midrule
Ours ($\Delta{=}0.75$) & $0.41$ & $0.77$ & $0.83$ & $0.69$ & $6.75$ & $20.76$ \\
\bottomrule
\end{tabular}
\end{table}

\subsection{Additional Results on DrawBench} We use DrawBench~\citep{saharia2022photorealistic} to evaluate whether our improvements extend beyond the Pick-a-Pic and HPDv2 prompts used in the main text. Its prompts cover counting, spatial relations, rare combinations, and long descriptions. As Table~\ref{tab:drawbench} shows, \method attains a satisfaction rate of $0.63$ and DreamSim of $0.48$, compared with $0.60$ and $0.05$ for DAS and $0.57$ and $0.18$ for FK steering. Although NegToMe produces greater diversity, it has a substantially lower satisfaction rate of $0.34$.

\begin{table}[ht]
\centering
\caption{\textbf{\method has the highest observed satisfaction rate on DrawBench and higher observed diversity than every reward-steering method.} We use FLUX.1-dev with HPSv3 on the $200$ DrawBench prompts and set $\rho{=}11.60$.}
\label{tab:drawbench}
\vspace{2mm}
\begin{tabular}{l cc c}
\toprule
& \multicolumn{2}{c}{Satisfaction--diversity} & \multicolumn{1}{c}{Additional metric} \\
\cmidrule(lr){2-3}\cmidrule(lr){4-4}
Method & DreamSim $\uparrow$ & $\mathrm{SAT}_\rho$ $\uparrow$ & LPIPS $\uparrow$ \\
\midrule
Base          & $0.70$ & $0.50$ & $0.64$ \\
NegToMe       & $0.90$ & $0.34$ & $0.78$ \\
\midrule
FK steering   & $0.18$ & $0.57$ & $0.16$ \\
FK + NegToMe  & $0.31$ & $0.37$ & $0.27$ \\
VASR          & $0.05$ & $0.56$ & $0.03$ \\
DAS           & $0.05$ & $0.60$ & $0.03$ \\
\midrule
Ours          & $0.48$ & $0.63$ & $0.45$ \\
\bottomrule
\end{tabular}
\end{table}

\subsection{Effect of a Higher Reward Floor} We repeat the SANA comparison with $\rho$ set to the $75$th percentile rather than the median of the base model's ImageReward distribution. As Figure~\ref{fig:sana-floor-q75} shows, \method has a higher observed satisfaction rate and DreamSim than every reward-steering baseline at $\Delta\in\{0.1,0.3,0.5,0.75\}$.

\begin{figure}[ht]
\centering
\includegraphics[width=0.68\linewidth]{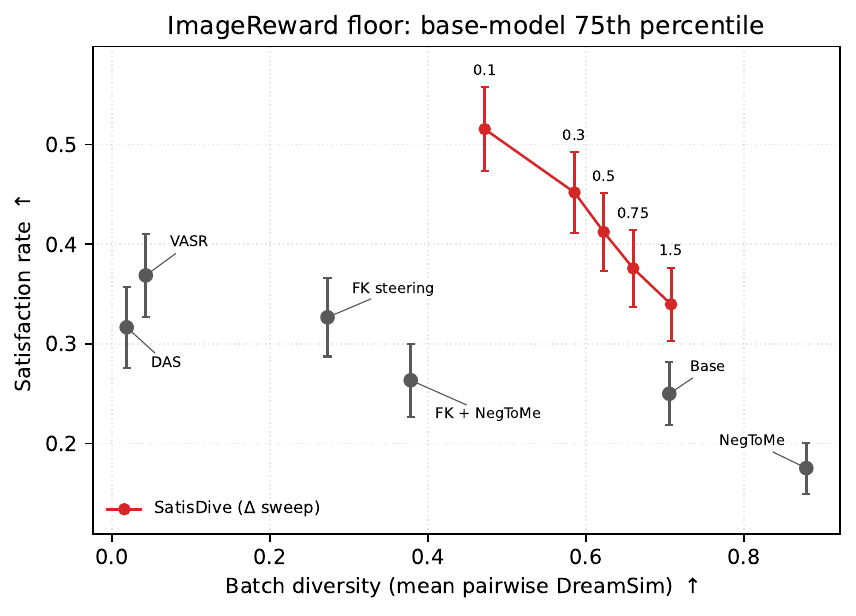}
\caption{\textbf{At $\Delta\in\{0.1,0.3,0.5,0.75\}$, \method has a higher observed satisfaction rate and DreamSim than every reward-steering baseline under the stricter floor.} We use SANA-1.6B with ImageReward and set $\rho{=}1.69$, the $75$th percentile of the base model's candidate rewards. Error bars are $95\%$ bootstrap confidence intervals over prompts. For legibility, we show five of the seven $\Delta$ settings.}
\label{fig:sana-floor-q75}
\end{figure}

\subsection{CMMD on MS-COCO}
CMMD~\citep{jayasumana2024rethinking} measures the distance between distributions of generated and real images. We compute CMMD between each method's generated images and the MS-COCO~\citep{lin2014microsoft} val2017 images to test whether \method's diversity advantage over reward-steering methods coincides with greater distance from the real-image distribution. \method obtains a CMMD of $0.81$, compared with $0.83$--$0.88$ for the baselines.

\begin{table}[H]
\centering
\caption{\textbf{\method has the lowest observed CMMD among the evaluated methods.} We report CMMD ($\times10^3$; lower is better) using $2000$ images generated from $500$ MS-COCO captions per method and all $5000$ MS-COCO val2017 images as the reference distribution. We use FLUX.1-dev with HPSv3, $\Delta{=}1.5$ for \method, $\lambda{=}1.5$ for FK steering, $\alpha{=}0.1$ and $\gamma{=}0.024$ for DAS, and the published NegToMe settings.}
\label{tab:cmmd}
\vspace{2mm}
\begin{tabular}{l c}
\toprule
Method & CMMD $\downarrow$ \\
\midrule
Base        & $0.87$ \\
NegToMe     & $0.83$ \\
FK steering & $0.88$ \\
DAS         & $0.87$ \\
\midrule
Ours        & $\mathbf{0.81}$ \\
\bottomrule
\end{tabular}
\end{table}

\subsection{Component Ablations}
Table~\ref{tab:abl-flux} removes \copyrule and the diversity penalty separately from \method to measure how each component changes satisfaction and diversity. Removing \copyrule reduces worst-candidate HPSv3 from $9.39$ to $8.58$ and satisfaction from $0.55$ to $0.50$, while removing the diversity penalty reduces DreamSim from $0.56$ to $0.54$.

\begin{table}[ht]
\centering
\caption{\textbf{Removing \copyrule lowers worst-candidate reward, while removing the diversity penalty lowers DreamSim.} We remove one component at a time from \method using FLUX.1-dev with HPSv3, $\Delta{=}1.5$, and $\rho{=}9.66$. The $95\%$ bootstrap confidence intervals computed from per-prompt differences exclude zero for the changes in worst-candidate and mean HPSv3 after removing \copyrule and for the changes in DreamSim and mean HPSv3 after removing the diversity penalty.}
\label{tab:abl-flux}
\vspace{2mm}
\small
\setlength{\tabcolsep}{5pt}
\begin{tabular}{l ccc c cc}
\toprule
& \multicolumn{3}{c}{Satisfaction--diversity} & \multicolumn{1}{c}{Additional metric} & \multicolumn{2}{c}{Held-out} \\
\cmidrule(lr){2-4}\cmidrule(lr){5-5}\cmidrule(lr){6-7}
Variant & min $\uparrow$ & DreamSim $\uparrow$ & $\mathrm{SAT}_\rho$ $\uparrow$ & mean $\uparrow$ & ImageReward & PickScore \\
\midrule
Full method           & $9.39$ & $0.56$ & $0.55$ & $10.03$ & $1.04$ & $22.32$ \\
w/o replacement       & $8.58$ & $0.68$ & $0.50$ & $9.65$ & $1.01$ & $22.27$ \\
w/o diversity penalty & $9.43$ & $0.54$ & $0.56$ & $10.07$ & $1.07$ & $22.35$ \\
\bottomrule
\end{tabular}
\end{table}

We additionally ablate \copyrule on Stable Diffusion~1.5 with ImageReward. As Table~\ref{tab:armb} shows, removing latent replacement reduces worst-candidate reward from $0.54$ to $-0.07$ and satisfaction from $0.77$ to $0.64$. DreamSim increases from $0.72$ to $0.88$ because the batch retains a low-reward candidate that the full method would replace.

\begin{table}[ht]
\centering
\caption{\textbf{\copyrule also improves worst-candidate reward and satisfaction on Stable Diffusion~1.5.} We compare the full method with a variant without latent replacement using Stable Diffusion~1.5 with ImageReward on Pick-a-Pic, $\Delta{=}0.75$, and $\rho{=}0.18$. The $\pm$ value is the half-width of a $95\%$ prompt-bootstrap confidence interval, where available.}
\label{tab:armb}
\vspace{2mm}
\begin{tabular}{l ccc cc}
\toprule
& \multicolumn{3}{c}{Satisfaction--diversity} & \multicolumn{2}{c}{Held-out} \\
\cmidrule(lr){2-4}\cmidrule(lr){5-6}
Variant & min $\uparrow$ & DreamSim & $\mathrm{SAT}_\rho$ $\uparrow$ & HPSv3 & PickScore \\
\midrule
Full method        & $0.54${\scriptsize$\,\pm0.08$} & $0.72$ & $0.77$ & $6.40$ & $20.69$ \\
w/o replacement    & $-0.07$ & $0.88$ & $0.64$ & $5.94$ & $20.57$ \\
\bottomrule
\end{tabular}
\end{table}

We also test whether the replaced candidate and the selected higher-reward candidate, whose latents are identical immediately after replacement, remain identical after subsequent denoising. The full FLUX.1-dev run in Table~\ref{tab:abl-flux} records $521$ replacement events on $314$ Pick-a-Pic prompts, all during the final two update steps. Immediately before replacement, the replaced candidate has a mean reward of $8.49$, compared with $10.87$ for the selected candidate. Across $503$ distinct pairs of replaced and selected candidates from these events, the corresponding final images have a mean DreamSim of $0.195$. Thus, identical latents do not necessarily remain identical after subsequent denoising.

\subsection{FLUX.1-dev with ImageReward}
Table~\ref{tab:flux-ir} repeats the FLUX.1-dev comparison with ImageReward instead of HPSv3 to test whether \method remains more diverse than reward-steering methods at comparable satisfaction. \method and FK steering both attain a satisfaction rate of $0.61$, while their DreamSim values are $0.57$ and $0.23$, respectively.

\begin{table}[ht]
\centering
\caption{\textbf{With ImageReward, \method matches FK steering's satisfaction while maintaining higher diversity.} We repeat the FLUX.1-dev comparison using ImageReward, $\Delta{=}0.5$, and $\rho{=}1.23$. The $\pm$ values are half-widths of $95\%$ percentile bootstrap confidence intervals over prompts.}
\vspace{2mm}
\label{tab:flux-ir}
\small
\begin{tabular}{l ccc c cc}
\toprule
& \multicolumn{3}{c}{Satisfaction--diversity} & \multicolumn{1}{c}{Additional metric} & \multicolumn{2}{c}{Held-out} \\
\cmidrule(lr){2-4}\cmidrule(lr){5-5}\cmidrule(lr){6-7}
Method & min $\uparrow$ & DreamSim $\uparrow$ & $\mathrm{SAT}_\rho$ $\uparrow$ & LPIPS $\uparrow$ & HPSv3 & PickScore \\
\midrule
Base                  & $0.68$ & $0.66${\scriptsize$\,\pm0.01$} & $0.50${\scriptsize$\,\pm0.04$} & $0.64$ & $9.65$ & $22.28$ \\
NegToMe               & $0.20$ & $0.88${\scriptsize$\,\pm0.01$} & $0.39${\scriptsize$\,\pm0.03$} & $0.79$ & $8.17$ & $21.49$ \\
\midrule
FK steering           & $1.16$ & $0.23${\scriptsize$\,\pm0.02$} & $0.61${\scriptsize$\,\pm0.04$} & $0.20$ & $9.74$ & $22.38$ \\
FK + NegToMe          & $0.97$ & $0.45${\scriptsize$\,\pm0.02$} & $0.56${\scriptsize$\,\pm0.04$} & $0.38$ & $7.45$ & $21.33$ \\
VASR                  & $1.24$ & $0.05${\scriptsize$\,\pm0.00$} & $0.62${\scriptsize$\,\pm0.04$} & $0.03$ & $9.79$ & $22.40$ \\
DAS                   & $1.18$ & $0.05${\scriptsize$\,\pm0.01$} & $0.58${\scriptsize$\,\pm0.04$} & $0.03$ & $9.75$ & $22.35$ \\
\midrule
Ours ($\Delta{=}0.5$) & $1.10$ & $0.57${\scriptsize$\,\pm0.02$} & $0.61${\scriptsize$\,\pm0.04$} & $0.55$ & $9.72$ & $22.34$ \\
\bottomrule
\end{tabular}
\end{table}

\subsection{Stable Diffusion~1.5 with ImageReward}
Table~\ref{tab:sd15-ir} tests whether the satisfaction--diversity comparison holds on SD1.5 with ImageReward. At $\Delta{=}0.5$, \method attains a satisfaction rate of $0.80$ and DreamSim of $0.65$, compared with $0.72$ and $0.49$ for FK steering and $0.77$ and $0.36$ for VASR. DAS attains a higher satisfaction rate of $0.84$ but a DreamSim of $0.04$. VASR (soft) uses reward-weighted resampling without redirecting duplicate selections to the highest-reward candidate.

\begin{table}[H]
\centering
\caption{\textbf{\method achieves higher satisfaction and diversity than FK steering on Stable Diffusion~1.5.} We use ImageReward with $\rho{=}0.18$ and report \method at three values of $\Delta$. The $\pm$ values are half-widths of $95\%$ percentile bootstrap confidence intervals over prompts.}
\label{tab:sd15-ir}
\vspace{2mm}
\small
\begin{tabular}{l ccc c cc}
\toprule
& \multicolumn{3}{c}{Satisfaction--diversity} & \multicolumn{1}{c}{Additional metric} & \multicolumn{2}{c}{Held-out} \\
\cmidrule(lr){2-4}\cmidrule(lr){5-5}\cmidrule(lr){6-7}
Method & min $\uparrow$ & DreamSim $\uparrow$ & $\mathrm{SAT}_\rho$ $\uparrow$ & LPIPS $\uparrow$ & HPSv3 & PickScore \\
\midrule
Base                   & $-0.62$ & $0.89${\scriptsize$\,\pm0.01$} & $0.50${\scriptsize$\,\pm0.03$} & $0.77$ & $5.77$ & $20.48$ \\
NegToMe                & $-1.24$ & $1.03${\scriptsize$\,\pm0.01$} & $0.32${\scriptsize$\,\pm0.03$} & $0.84$ & $2.79$ & $19.63$ \\
\midrule
FK steering            & $0.35$ & $0.49${\scriptsize$\,\pm0.03$} & $0.72${\scriptsize$\,\pm0.04$} & $0.40$ & $6.71$ & $20.78$ \\
FK + NegToMe           & $-0.63$ & $0.71${\scriptsize$\,\pm0.03$} & $0.45${\scriptsize$\,\pm0.04$} & $0.56$ & $1.63$ & $19.53$ \\
VASR                   & $0.62$ & $0.36${\scriptsize$\,\pm0.01$} & $0.77${\scriptsize$\,\pm0.04$} & $0.26$ & $7.00$ & $20.81$ \\
VASR (soft)            & $0.16$ & $0.54${\scriptsize$\,\pm0.03$} & $0.69${\scriptsize$\,\pm0.04$} & $0.44$ & $6.61$ & $20.71$ \\
DAS                    & $1.14$ & $0.04${\scriptsize$\,\pm0.01$} & $0.84${\scriptsize$\,\pm0.03$} & $0.01$ & $-4.01$ & $18.29$ \\
\midrule
Ours ($\Delta{=}0.5$)  & $0.70${\scriptsize$\,\pm0.08$} & $0.65${\scriptsize$\,\pm0.02$} & $0.80${\scriptsize$\,\pm0.03$} & $0.57$ & $6.57$ & $20.72$ \\
Ours ($\Delta{=}1.0$)  & $0.44${\scriptsize$\,\pm0.08$} & $0.76${\scriptsize$\,\pm0.02$} & $0.74${\scriptsize$\,\pm0.03$} & $0.68$ & $6.32$ & $20.66$ \\
Ours ($\Delta{=}1.25$) & $0.32${\scriptsize$\,\pm0.09$} & $0.80${\scriptsize$\,\pm0.01$} & $0.71${\scriptsize$\,\pm0.04$} & $0.71$ & $6.24$ & $20.64$ \\
\bottomrule
\end{tabular}
\end{table}

We use DAS's published SD1.5 settings. DAS attains the highest ImageReward satisfaction rate, $0.84$, but its mean held-out HPSv3 is $-4.01$, compared with $5.77$ for the base model and $6.24$--$6.57$ for \method.

\subsection{Comparison with CREPE}
\label{ssec:app-crepe}

Table~\ref{tab:crepe-sd15} compares \method with CREPE~\citep{he2026crepe} at reward scales $10$ and $100$ on the same Pick-a-Pic prompts using SD1.5 and ImageReward. At $\Delta{=}1.0$, \method achieves higher worst-candidate reward at similar DreamSim than both CREPE settings; at $\Delta{=}1.25$, it has higher observed means for both metrics.

\begin{table}[ht]
\centering
\caption{\textbf{\method improves worst-candidate reward at comparable diversity to both evaluated CREPE settings.} Both methods return $K{=}4$ images per prompt using SD1.5 with ImageReward. We use $\rho{=}0.18$ and report means on Pick-a-Pic; $\pm$ values are half-widths of $95\%$ percentile confidence intervals from $200{,}000$ prompt-bootstrap resamples, keeping each prompt's four candidates together and $\rho$ fixed.}
\label{tab:crepe-sd15}
\vspace{2mm}
\small
\begin{tabular}{l ccc c}
\toprule
& \multicolumn{3}{c}{Satisfaction--diversity} & \multicolumn{1}{c}{Additional metric} \\
\cmidrule(lr){2-4}\cmidrule(lr){5-5}
Method & min $\uparrow$ & DreamSim $\uparrow$ & $\mathrm{SAT}_\rho$ $\uparrow$ & mean $\uparrow$ \\
\midrule
CREPE (scale $10$) & $-0.33${\scriptsize$\,\pm0.09$} & $0.761${\scriptsize$\,\pm0.011$} & $0.528${\scriptsize$\,\pm0.036$} & $0.20${\scriptsize$\,\pm0.08$} \\
CREPE (scale $100$) & $0.07${\scriptsize$\,\pm0.08$} & $0.753${\scriptsize$\,\pm0.011$} & $0.660${\scriptsize$\,\pm0.034$} & $0.52${\scriptsize$\,\pm0.07$} \\
\midrule
Ours ($\Delta{=}0.5$) & $0.70${\scriptsize$\,\pm0.08$} & $0.650${\scriptsize$\,\pm0.021$} & $0.802${\scriptsize$\,\pm0.033$} & $0.88${\scriptsize$\,\pm0.08$} \\
Ours ($\Delta{=}1.0$) & $0.44${\scriptsize$\,\pm0.08$} & $0.762${\scriptsize$\,\pm0.016$} & $0.740${\scriptsize$\,\pm0.034$} & $0.71${\scriptsize$\,\pm0.08$} \\
Ours ($\Delta{=}1.25$) & $0.32${\scriptsize$\,\pm0.09$} & $0.796${\scriptsize$\,\pm0.015$} & $0.708${\scriptsize$\,\pm0.035$} & $0.63${\scriptsize$\,\pm0.08$} \\
\bottomrule
\end{tabular}
\end{table}

We adapt the released CREPE sampler to SD1.5, generating $512\times512$ images with guidance $7.5$, $64$ noise levels, and $200$ replica-exchange iterations. We evaluate effective terminal reward scales $10$ and $100$; scale $100$ corresponds to the released reward maximum, with all other settings unchanged between runs. Four outputs are retained at fixed late iterations and denoised to completion, without reward-based selection. \method uses the SD1.5 configuration in Appendix~\ref{app:hyperparameters}. We evaluate independently generated batches on the same prompts using common ImageReward and DreamSim implementations.

\subsection{Comparison with SGI}
\label{ssec:app-sgi}

We compare with SGI~\citep{parmar2026scaling}, which begins with $M{=}128$ candidates and progressively prunes them to return $K{=}4$. Its generation budget is therefore not matched to methods that maintain four candidates throughout sampling. We use HPSv3 to score individual candidates and sweep SGI's diversity weight, denoted $\lambda_{\mathrm{SGI}}$ to distinguish it from FK steering's $\lambda$. As $\lambda_{\mathrm{SGI}}$ increases, DreamSim increases from $0.647$ to $0.823$, while satisfaction rate decreases from $0.592$ to $0.507$.

\begin{table}[ht]
\centering
\caption{\textbf{Increasing SGI's diversity weight increases DreamSim but decreases satisfaction rate.} We evaluate SGI on Pick-a-Pic using FLUX.1-dev with HPSv3. SGI prunes $M{=}128$ candidates to $K{=}4$, and $\lambda_{\mathrm{SGI}}$ weights its DINO diversity score. We use $\rho{=}9.66$; the $\pm$ values are half-widths of $95\%$ bootstrap confidence intervals over prompts.}
\label{tab:sgi-flux-hpsv3}
\vspace{2mm}
\small
\setlength{\tabcolsep}{4pt}
\begin{tabular}{c ccc c}
\toprule
& \multicolumn{3}{c}{Satisfaction--diversity} & \multicolumn{1}{c}{Additional metric} \\
\cmidrule(lr){2-4}\cmidrule(lr){5-5}
$\lambda_{\mathrm{SGI}}$ & min $\uparrow$ & DreamSim $\uparrow$ & $\mathrm{SAT}_\rho$ $\uparrow$ & mean $\uparrow$ \\
\midrule
$0$   & $9.32${\scriptsize$\,\pm0.20$} & $0.647${\scriptsize$\,\pm0.012$} & $0.592${\scriptsize$\,\pm0.037$} & $10.20$ \\
$1.5$ & $9.17${\scriptsize$\,\pm0.20$} & $0.693${\scriptsize$\,\pm0.012$} & $0.577${\scriptsize$\,\pm0.037$} & $10.15$ \\
$6$   & $8.89${\scriptsize$\,\pm0.20$} & $0.744${\scriptsize$\,\pm0.013$} & $0.563${\scriptsize$\,\pm0.036$} & $10.04$ \\
$24$  & $8.41${\scriptsize$\,\pm0.22$} & $0.803${\scriptsize$\,\pm0.013$} & $0.529${\scriptsize$\,\pm0.035$} & $9.82$ \\
$96$  & $8.12${\scriptsize$\,\pm0.23$} & $0.823${\scriptsize$\,\pm0.013$} & $0.507${\scriptsize$\,\pm0.035$} & $9.65$ \\
\bottomrule
\end{tabular}
\end{table}

\subsection{Inference-Time Computation}

Figure~\ref{fig:compute-ops} reports diffusion-model passes separately from forward evaluations of auxiliary scoring models. Under the FLUX.1-dev configuration, \method uses more diffusion-model passes than Base, NegToMe, FK steering, FK~+~NegToMe, and VASR, but fewer diffusion-model passes and reward-model forward evaluations than DAS even when every partial \method backward pass is counted as a full pass. SGI uses more diffusion-model passes and repeatedly applies HPSv3 and DINO while pruning $M{=}128$ candidates to $K{=}4$. These are operation counts rather than wall-clock measurements.

For \method, the $152$ diffusion-model forward passes comprise $112$ denoising passes, $24$ additional gradient-evaluation passes, $12$ replacement-check passes, and $4$ final feature-extraction passes. Adding $24$ full and $24$ partial backward passes gives the reported upper bound of $200$. The $40$ reward evaluations comprise $24$ update evaluations, $12$ replacement checks, and $4$ final evaluations. Panel~(b) reports auxiliary-model forward evaluations.

\begin{figure}[H]
\centering
\includegraphics[width=\textwidth]{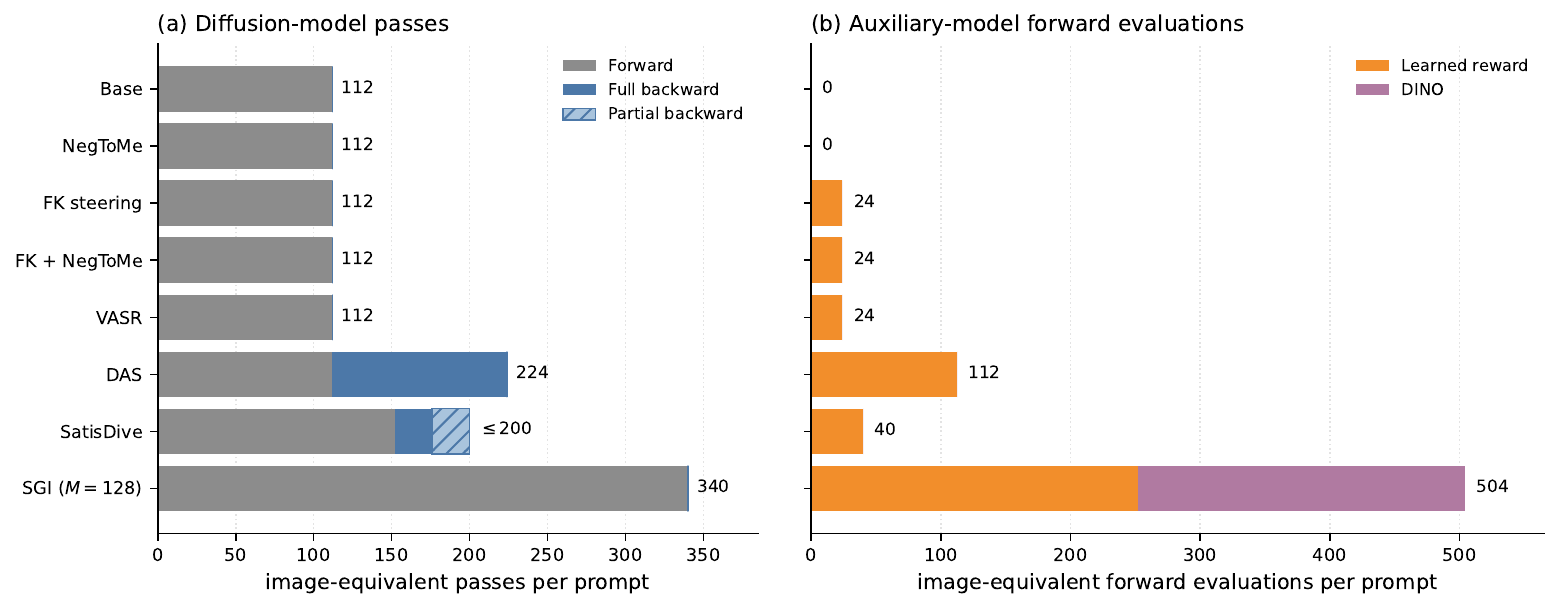}
\caption{\textbf{Inference-time operation counts for the evaluated FLUX.1-dev methods.} One image-equivalent pass processes one candidate once through the indicated model; processing a batch of $b$ candidates therefore counts as $b$ passes. We count operations per prompt for $K{=}4$ returned candidates and $28$ denoising steps. \textbf{(a)} \method uses at most $200$ diffusion-model passes, compared with $112$ for Base, NegToMe, FK steering, FK~+~NegToMe, and VASR, $224$ for DAS, and $340$ for SGI. The \method count is an upper bound because $24$ of its $48$ backward passes stop at the intermediate feature capture. DAS computes a reward gradient at every denoising step; its SSP resampling condition is checked at steps $4,8,\ldots,24$. \textbf{(b)} We count auxiliary-model forward evaluations. SGI uses $252$ HPSv3 evaluations and $252$ DINO evaluations while pruning $M{=}128$ candidates to $K{=}4$. The counts exclude VAE decoding and SGI's candidate-selection optimization.}
\label{fig:compute-ops}
\end{figure}

Table~\ref{tab:runtime-memory} separately reports measured generation time and GPU memory for the FLUX.1-dev implementations. We use the same three prompts and starting seeds for all methods, following one excluded warmup prompt.

\begin{table}[ht]
\centering
\caption{\textbf{Measured runtime and GPU memory for FLUX.1-dev.} Each prompt returns $K{=}4$ images at $1024\times1024$ resolution using $28$ denoising steps and HPSv3 for reward-guided methods. Time is the mean over three prompts, excluding model loading and image-file writes. Memory is the largest sampled resident GPU memory, summed across GPUs over generation and reward-model processes, rather than an exact peak. All runs use H100 80\,GB GPUs; two-GPU methods place the generator and reward model on separate devices. Figure~\ref{fig:corner} reports the corresponding worst-candidate reward improvements at matched diversity.}
\vspace{2mm}
\label{tab:runtime-memory}
\small
\begin{tabular}{lccc}
\toprule
Method & Time (s/prompt) & Total GPU memory (GiB) & GPUs used \\
\midrule
Base & 22.5 & 42.2 & 1 \\
NegToMe & 25.9 & 42.2 & 1 \\
FK steering & 37.7 & 58.5 & 2 \\
FK + NegToMe & 40.5 & 58.5 & 2 \\
VASR & 37.6 & 58.5 & 2 \\
DAS & 117.2 & 78.2 & 2 \\
\method & 60.8 & 90.3 & 2 \\
\bottomrule
\end{tabular}
\end{table}

\subsection{Reward-Temperature Robustness}
\label{ssec:app-seeds}

Because $\beta_r$ is computed from the standard deviation of the candidate rewards, it depends on the initial-noise seed. We repeat the Stable Diffusion~1.5 experiment with five seeds. Across these seeds, worst-candidate reward ranges from $0.54$ to $0.59$, satisfaction from $0.75$ to $0.78$, and DreamSim from $0.71$ to $0.73$. None of the pairwise differences between seeds is statistically significant for any reported metric at the $5\%$ level.

\begin{table}[ht]
\centering
\caption{\textbf{\method's satisfaction and diversity remain stable across five initial-noise seeds.} We use Stable Diffusion~1.5 with ImageReward on the same Pick-a-Pic prompts, set $\Delta{=}0.75$, and compute $\beta_r$ separately for each prompt and seed. The $\pm$ values are half-widths of $95\%$ bootstrap confidence intervals over prompts.}
\vspace{2mm}
\label{tab:sd15-seeds}
\small
\begin{tabular}{l ccc c cc}
\toprule
& \multicolumn{3}{c}{Satisfaction--diversity} & \multicolumn{1}{c}{Additional metric} & \multicolumn{2}{c}{Held-out} \\
\cmidrule(lr){2-4}\cmidrule(lr){5-5}\cmidrule(lr){6-7}
Run & min $\uparrow$ & DreamSim $\uparrow$ & $\mathrm{SAT}_\rho$ $\uparrow$ & LPIPS $\uparrow$ & HPSv3 & PickScore \\
\midrule
Seed $0$ & $0.54${\scriptsize$\,\pm0.08$} & $0.72${\scriptsize$\,\pm0.02$} & $0.77${\scriptsize$\,\pm0.03$} & $0.64$ & $6.40$ & $20.69$ \\
Seed $1$ & $0.59${\scriptsize$\,\pm0.08$} & $0.72${\scriptsize$\,\pm0.02$} & $0.76${\scriptsize$\,\pm0.03$} & $0.64$ & $6.51$ & $20.71$ \\
Seed $2$ & $0.59${\scriptsize$\,\pm0.08$} & $0.71${\scriptsize$\,\pm0.02$} & $0.76${\scriptsize$\,\pm0.03$} & $0.63$ & $6.52$ & $20.69$ \\
Seed $3$ & $0.58${\scriptsize$\,\pm0.08$} & $0.71${\scriptsize$\,\pm0.02$} & $0.78${\scriptsize$\,\pm0.03$} & $0.62$ & $6.63$ & $20.72$ \\
Seed $4$ & $0.54${\scriptsize$\,\pm0.08$} & $0.73${\scriptsize$\,\pm0.02$} & $0.75${\scriptsize$\,\pm0.03$} & $0.65$ & $6.42$ & $20.69$ \\
\bottomrule
\end{tabular}
\end{table}

\subsection{Reward Capping without the Diversity Penalty}
\label{ssec:app-dasclip}

To test whether stopping further reward optimization at $\rho$ is sufficient to preserve diversity, we modify DAS by replacing $r_k$ with $\min(r_k,\rho)$ in its guidance gradient and resampling weights. This makes rewards at or above $\rho$ equivalent in both computations without adding \method's diversity penalty. All other DAS settings remain unchanged. Reward capping increases DreamSim from $0.016$ to $0.066$, while \method reaches $0.658$ on the same prompts and reward floor. The change in DAS's satisfaction rate is not statistically significant.

\begin{table}[H]
\centering
\caption{\textbf{Reward-capped DAS remains less diverse than \method.} We use SANA-1.6B with ImageReward on Pick-a-Pic and set $\rho{=}1.19$. The reward-capped variant replaces $r_k$ with $\min(r_k,\rho)$ in the guidance gradient and resampling weights. The $\pm$ values are half-widths of $95\%$ bootstrap confidence intervals over prompts.}
\vspace{2mm}
\label{tab:dasclip}
\setlength{\tabcolsep}{5.3pt}
\begin{tabular}{l ccc c c}
\toprule
& \multicolumn{3}{c}{Satisfaction--diversity} & \multicolumn{1}{c}{Additional metric} & \multicolumn{1}{c}{Held-out} \\
\cmidrule(lr){2-4}\cmidrule(lr){5-5}\cmidrule(lr){6-6}
Method & min $\uparrow$ & DreamSim $\uparrow$ & $\mathrm{SAT}_\rho$ $\uparrow$ & mean $\uparrow$ & HPSv3 \\
\midrule
DAS                    & $1.10${\scriptsize$\,\pm0.07$} & $0.016${\scriptsize$\,\pm0.001$} & $0.599${\scriptsize$\,\pm0.043$} & $1.11$ & $7.72$ \\
DAS (reward capped)    & $1.06${\scriptsize$\,\pm0.07$} & $0.066${\scriptsize$\,\pm0.008$} & $0.577${\scriptsize$\,\pm0.042$} & $1.09$ & $7.65$ \\
\midrule
Ours ($\Delta{=}0.75$) & $1.13${\scriptsize$\,\pm0.07$} & $0.658${\scriptsize$\,\pm0.015$} & $0.674${\scriptsize$\,\pm0.039$} & $1.28$ & $7.63$ \\
\bottomrule
\end{tabular}
\end{table}

\clearpage
\section{Additional Qualitative Comparisons}
\label{app:qualitative}

Figures~\ref{fig:qual-sana-car}--\ref{fig:qual-sana-bus} extend the SANA-1.6B comparison to all seven methods. Each row contains four candidates generated from matched starting seeds and reports their minimum ImageReward and mean pairwise DreamSim; the pink outline marks the candidate with the lowest ImageReward in that row. The reward floor is $\rho_{\mathrm{IR}}{=}1.19$. Figure~\ref{fig:qual-sd15-wanderer} provides a corresponding comparison with Stable Diffusion~1.5 at its reward floor $\rho_{\mathrm{IR}}{=}0.18$.

\begin{figure}[!ht]
\centering
\includegraphics[width=0.70\textwidth]{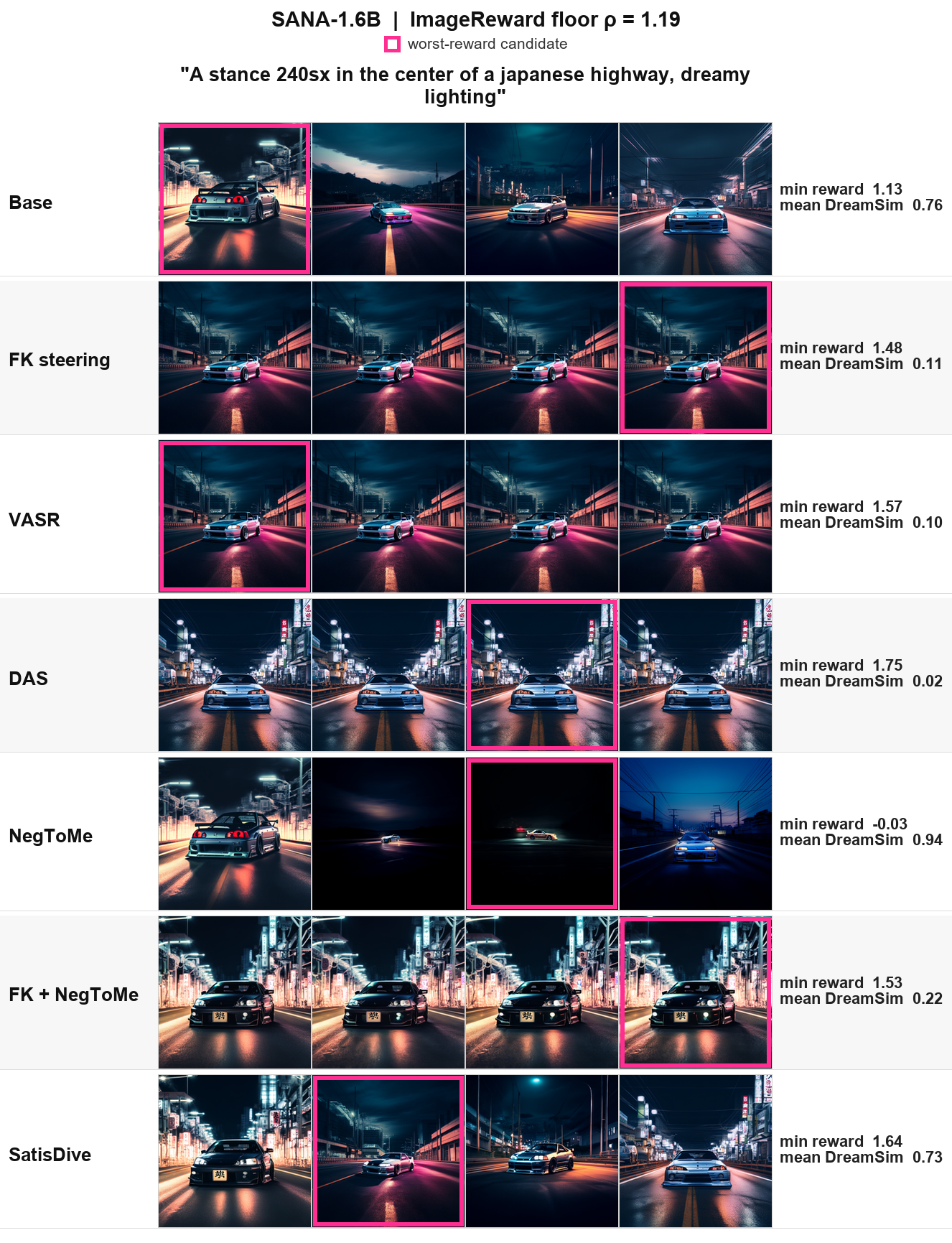}
\caption{\textbf{On the Japanese-highway car prompt, \method returns four visually distinct candidates above the reward floor.} \method attains a minimum ImageReward of $1.64$ and a DreamSim of $0.73$. FK steering, VASR, DAS, and FK~+~NegToMe also satisfy the reward floor, but their DreamSim values are $0.11$, $0.10$, $0.02$, and $0.22$. Base and NegToMe remain diverse, but their minimum rewards of $1.13$ and $-0.03$ fall below the reward floor.}
\label{fig:qual-sana-car}
\end{figure}

\begin{figure}[p]
\centering
\includegraphics[width=0.86\textwidth]{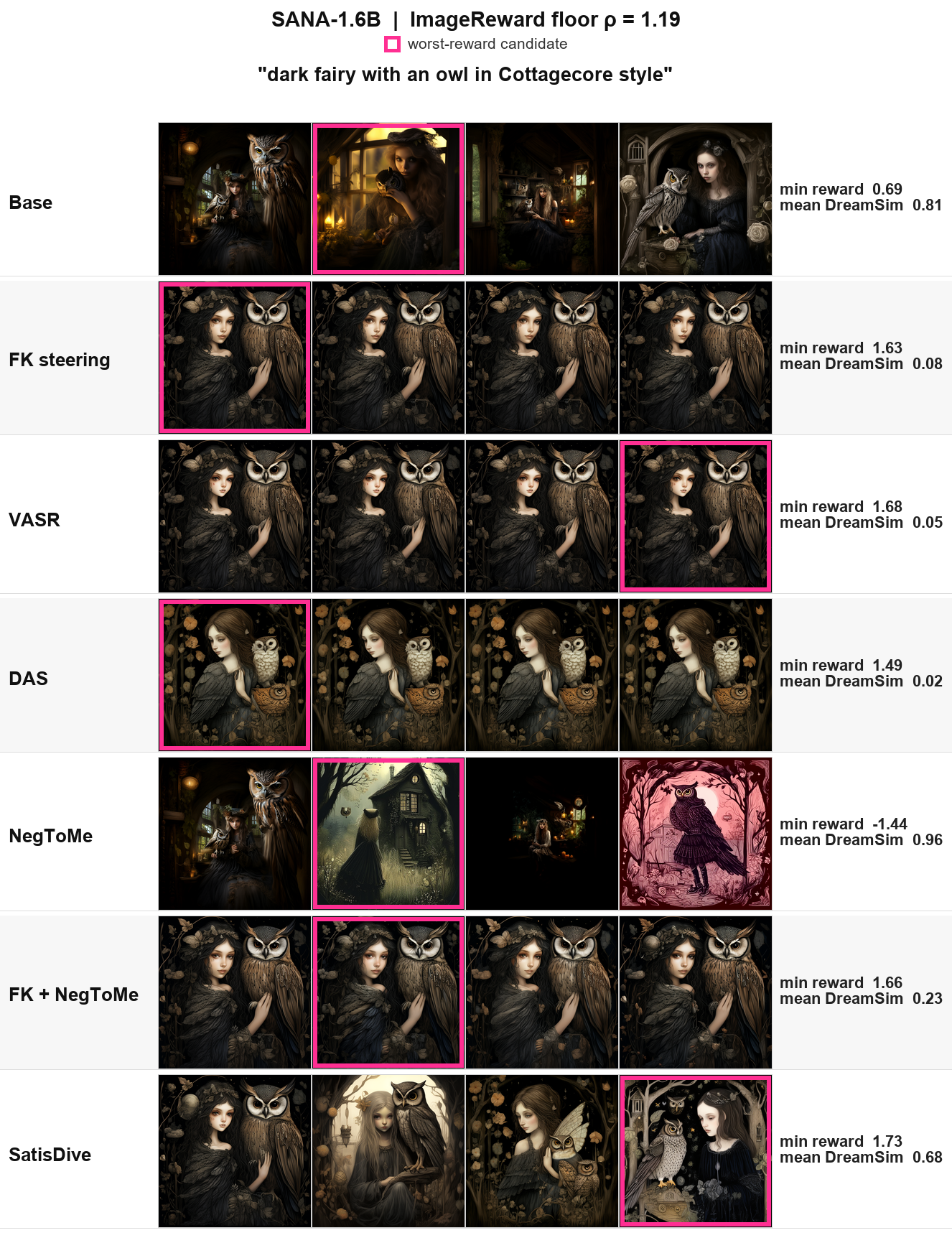}
\caption{\textbf{On the fairy-and-owl prompt, \method has the highest minimum ImageReward, $1.73$, and a DreamSim of $0.68$.} FK steering, VASR, DAS, and FK~+~NegToMe have DreamSim values between $0.02$ and $0.23$. Base and NegToMe remain diverse, but their minimum rewards of $0.69$ and $-1.44$ are below the reward floor.}
\label{fig:qual-sana-fairy}
\end{figure}

\begin{figure}[p]
\centering
\includegraphics[width=0.86\textwidth]{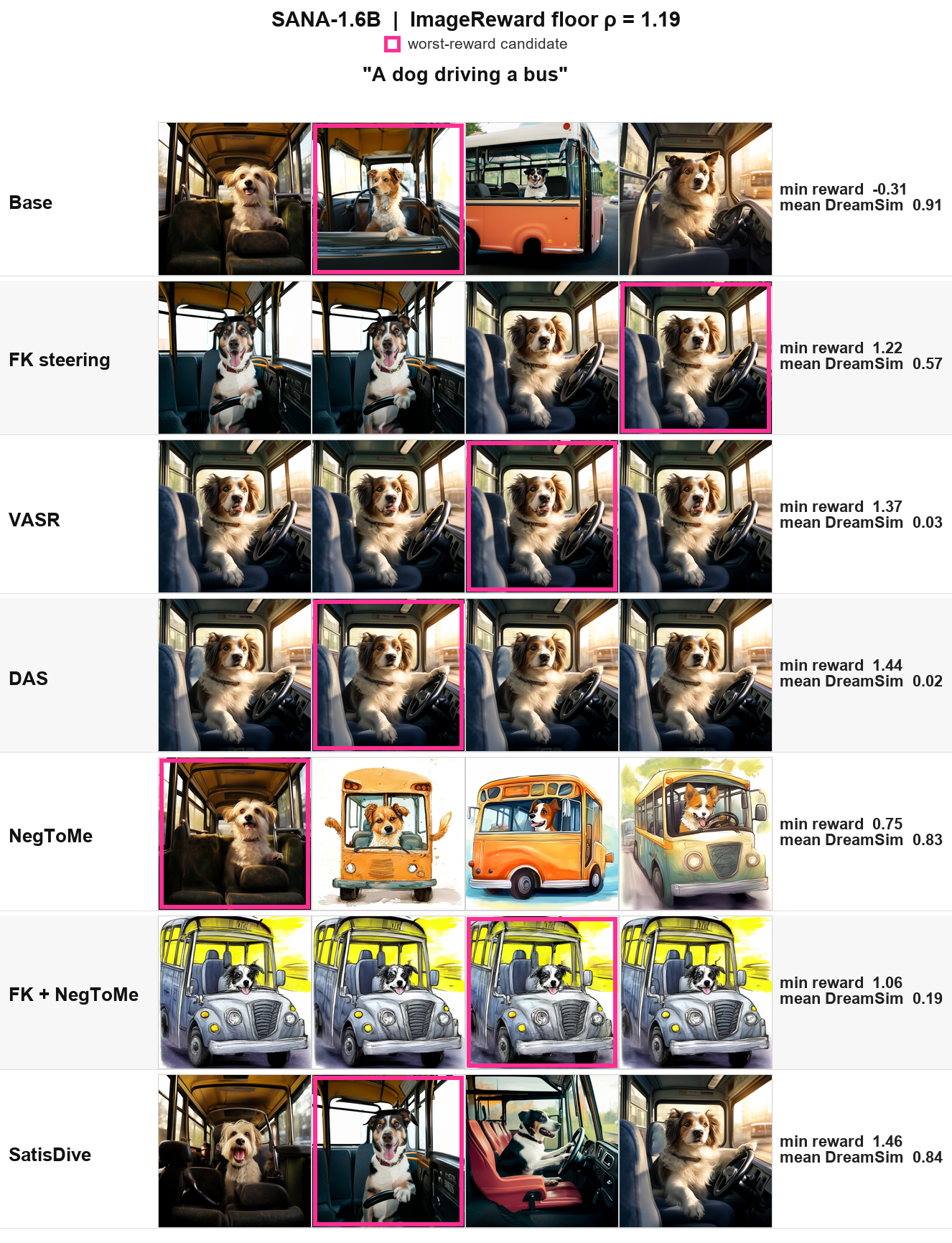}
\caption{\textbf{On the dog-driving-a-bus prompt, \method attains a minimum ImageReward of $1.46$ and a DreamSim of $0.84$.} VASR and DAS satisfy the reward floor but return visually similar candidates. Base and NegToMe have comparable diversity to \method, but each includes candidates below the reward floor.}
\label{fig:qual-sana-bus}
\end{figure}

\begin{figure}[p]
\centering
\includegraphics[width=0.86\textwidth]{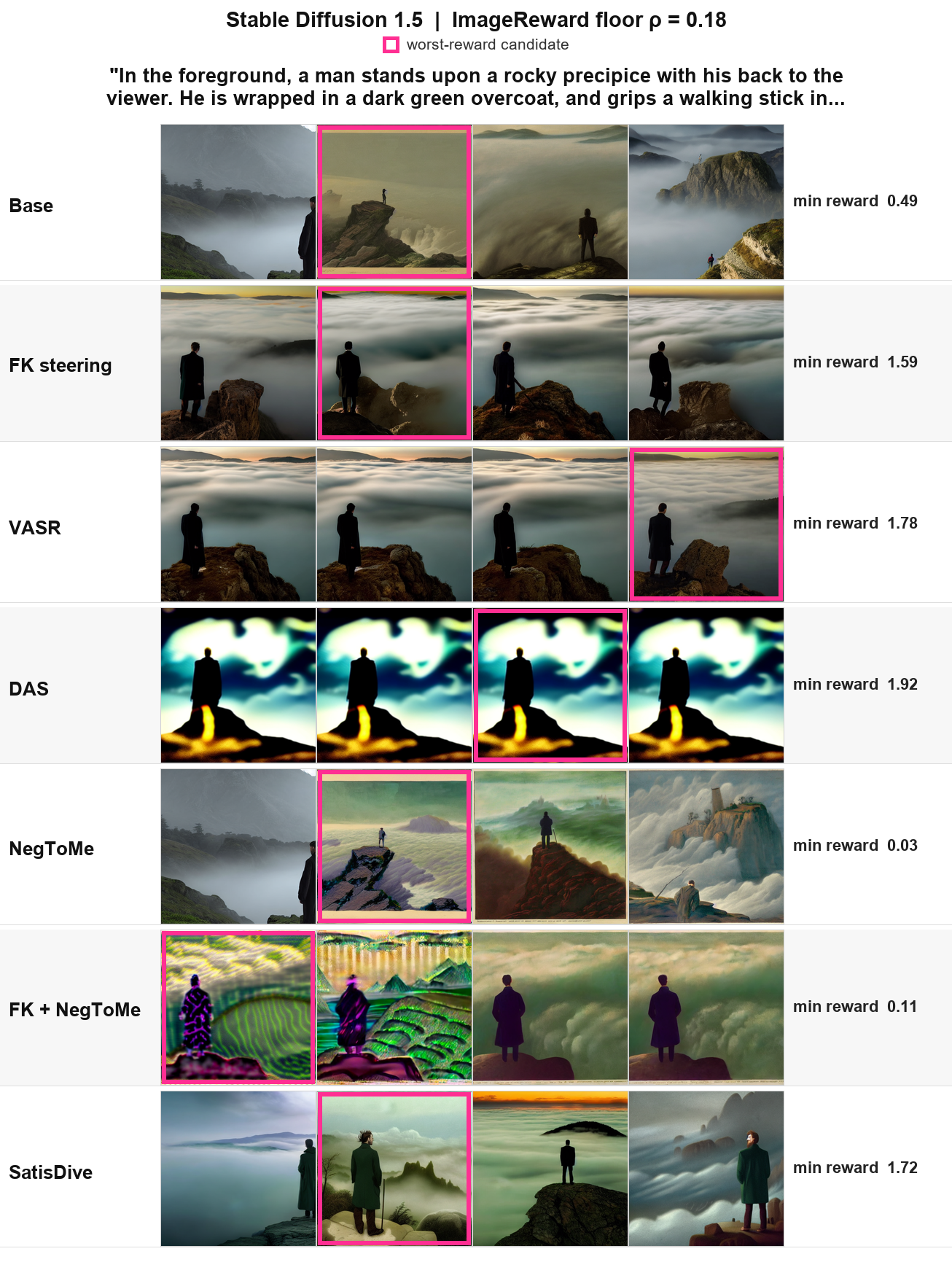}
\caption{\textbf{For Stable Diffusion~1.5, \method returns four visually distinct images with a minimum ImageReward of $1.72$.} The prompt describes a man on a rocky precipice above clouds. Base and NegToMe have minimum rewards of $0.49$ and $0.03$, while the candidates from FK steering, VASR, and DAS are visually similar within each row. All methods use matched starting seeds.}
\label{fig:qual-sd15-wanderer}
\end{figure}

\clearpage

\end{document}